%% file: example.tex
\documentclass{article}

\usepackage[preprint]{corl_2026} 

\usepackage{microtype}
\usepackage{graphicx}

\usepackage{booktabs} 
\usepackage{enumitem} 
\usepackage{subcaption}
\usepackage{wrapfig}

\usepackage{amsmath}
\usepackage{amssymb}
\usepackage{mathtools}
\usepackage{amsthm}
\theoremstyle{plain}
\newtheorem{theorem}{Theorem}[section]
\newtheorem{proposition}[theorem]{Proposition}

\newtheorem{corollary}[theorem]{Corollary}
\theoremstyle{definition}

\theoremstyle{remark}

\usepackage{algorithm}
\usepackage{algpseudocode}

\title{Diagnosing Compositional Generalization in Sequential Robot Tasks}

\author{
\normalfont
Yixiao Wang$^{1}$,
Cheng-En Wu$^{1}$,
Lingfeng Sun$^{1}$,
Pengcheng Wang$^{1}$,
Xiang Ji$^{2}$,\\
Boyuan Liang$^{1}$,
Guojian Zhan$^{2}$,
Masayoshi Tomizuka$^{1}$\\[4pt]
\small
$^{1}$University of California, Berkeley
\qquad
$^{2}$Tsinghua University
}

\begin{document}
\maketitle


\input{sections/abs}


\input{sections/intro}

\input{sections/related}

\input{sections/prob}

\input{sections/theorem}

\input{sections/exp_v1}

\section{Conclusion and Limitation}
\label{sec:conclusion}

We presented a theory-driven study of compositional generalization in sequential robot tasks. Our analysis decomposes the generalization gap into \textit{marginal instruction shift}, \textit{instruction-compositional shift}, and \textit{context--action shift}. Guided by this decomposition, our experiments provide three main insights. First, exhaustive enumeration of all task tuples is unnecessary; a carefully structured subset, even one quarter of the full task space, can be sufficient when it covers the dependencies that matter for action prediction. Second, sparse training often fails not because the policy lacks subtask skills, but because it cannot correctly steer those skills under unseen instruction recombinations: finetuning with only one demonstration per task can improve OOD success from \(0.4\%\) to \(54.7\%\). Third, when task factors are semantically dependent, effective coverage must capture the relational structure that determines the correct action, rather than only ensuring factor diversity. Overall, these findings suggest that efficient robot data collection should prioritize dependency coverage in the instruction space instead of exhaustively expanding the task set.

This study has several limitations. First, while our experiments cover both independent and dependent instructions, they are restricted to a small set of sequential manipulation tasks, as the required demonstrations and evaluation tasks grow exponentially with instruction complexity; extending this analysis to large-scale training, and characterizing how such coverage patterns inform pretraining, is an important direction for future work. Second, our analysis centers on policies trained from scratch atop a frozen vision encoder; a natural extension is to pretrained vision-language-action (VLA) models, both to diagnose the origins of compositional generalization and to assess how our coverage principle transfers to this setting.


\clearpage
\acknowledgments{If a paper is accepted, the final camera-ready version will (and probably should) include acknowledgments. All acknowledgments go at the end of the paper, including thanks to reviewers who gave useful comments, to colleagues who contributed to the ideas, and to funding agencies and corporate sponsors that provided financial support.}


\bibliography{example}  

\input{sections/app}

\end{document}

%% file: sections/abs.tex
\begin{abstract}
Sequential robot manipulation requires policies to execute novel combinations of familiar instruction components. However, collecting demonstrations for all possible instruction tuples is combinatorially expensive, while sparsely covered datasets often fail under out-of-distribution recombination. This paper studies compositional generalization through the lens of instruction-space coverage. We decompose the generalization gap into three sources: \textit{marginal instruction shift}, \textit{instruction-compositional shift}, and \textit{context--action shift}. This decomposition allows us to diagnose when sparse training coverage is sufficient, and what structure the training set must preserve for reliable action prediction. Our results show that exhaustive tuple enumeration is unnecessary: a structured subset, as small as one quarter of the full task space, can recover strong out-of-distribution performance when it covers action-relevant dependencies. We further find that sparse training often fails due to instruction steering rather than missing low-level skills; finetuning only one demonstration per task improves OOD success from \(0.4\%\) to \(54.7\%\). For semantically dependent tasks, effective coverage must capture relational structure rather than only factor diversity. These findings suggest that efficient robot data collection should prioritize dependency coverage in instruction space over exhaustive task expansion. More results are available in the supplementary material. Project website: \url{https://yixiaowang7.github.io/Diagnosing_Compositional_Generalization_Robot_Page/}.
\end{abstract}

\keywords{Compositional Generalization; Data-Efficient Imitation Learning} 

%% file: sections/intro.tex
\section{Introduction}

Sequential robot manipulation often requires executing a \emph{composite} instruction composed of multiple subtasks. A robot may need to pick a specified object, place it into a specified container, and then interact with another target. Existing approaches typically handle such problems in two ways. One family of methods uses a hierarchical pipeline, where a high-level controller selects the active subtask and a low-level policy executes the corresponding atomic instruction~\cite{ahn2022icanisay, pi0.5, geminirobotics}. Another family trains end-to-end instruction-conditioned policies that map observations and the full instruction directly to actions~\cite{team2024octo, fan2025interleave}. Both paradigms can perform well when training and test instructions substantially overlap. However, real deployments require \emph{systematic recombination}: the robot must execute new instruction tuples whose individual components are familiar, but whose combinations were rarely or never observed during training.

A naive response is to collect demonstrations for every possible instruction tuple. This strategy is usually impractical, because the number of task combinations grows combinatorially with the number of instruction factors. In practice, robot datasets are often \emph{compositionally sparse}: they cover individual objects, targets, and primitive behaviors, but only a small fraction of their possible recombinations. This raises a fundamental question for robot data collection: \emph{what coverage of the instruction space is actually necessary for compositional generalization?}

This paper studies compositional generalization from the perspective of \emph{instruction-space coverage}. We formalize the generalization gap between a training distribution and a test distribution over held-out recombinations of familiar instruction factors. The resulting analysis decomposes the gap into three sources: \textit{marginal instruction shift}, \textit{instruction-compositional shift}, and \textit{context--action shift}. This decomposition highlights a key distinction: observing every atomic instruction value is not enough if the training data do not cover the dependencies among instruction factors that determine the action distribution. Conversely, full Cartesian coverage is not always necessary when the relevant dependency structure is sufficiently covered. This perspective also clarifies the role of stage-wise modularity. A modular policy that conditions only on the active subtask can reduce unnecessary dependence on irrelevant instruction factors, leading to a sharper stage-wise generalization bound. However, modularity alone is not a complete solution. Inactive instruction factors can influence the contexts, and it requires a high-level planner that decomposes the task and generates dependency-preserving subtask instructions. Thus, the central issue is not only how the policy is parameterized, but also whether the training support identifies the instruction dependencies that govern behavior.

Guided by this analysis, we conduct controlled sequential robot manipulation experiments with both independent and semantically dependent instruction structures. The experiments provide three main insights. First, exhaustive enumeration of all task tuples is unnecessary; a carefully structured subset, even one quarter of the full task space, can be sufficient when it covers the dependencies that matter for action prediction. Second, sparse training often fails not because the policy lacks reusable subtask skills, but because it cannot correctly steer those skills under unseen instruction recombinations. Finetuning with only one demonstration per task improves OOD success from \(0.4\%\) to \(54.7\%\), supporting this interpretation. Third, when task factors are semantically dependent, effective coverage must capture the relational structure that determines the correct action, rather than only ensuring factor diversity. Together, these results suggest a practical principle for robot data collection: demonstrations should be allocated to cover action-relevant dependency structure instead of exhaustively expanding the task set.

%% file: sections/related.tex
\section{Related Work}

\paragraph{Language-Conditioned Robot Policies.}
Recent robot learning systems train generalist policies that map observations and language or multimodal prompts to actions~\cite{zitkovich2023rt, team2024octo, kim2024openvla, pi0.5, fan2025interleave}. These policies typically use Transformer backbones~\cite{vaswani2017attention}, with action heads ranging from autoregressive token prediction~\cite{pertsch2025fast} to diffusion- or flow-based generation~\cite{diffusionpolicy, pi0.5}. Multimodal systems such as VIMA and Interleave-VLA further show how multimodal tokens can specify diverse manipulation tasks under a unified interface~\cite{jiang2023vimageneralrobotmanipulation, fan2025interleave}. While these works demonstrate scalable instruction-conditioned imitation learning, our question is more specific: when the instruction space is combinatorially large, which instruction tuples must be covered to generalize to held-out recombinations?

\paragraph{Compositional Generalization in Language and Robotics.}
Systematic generalization has long been studied in language and sequence modeling, where models can perform well on random splits but fail on recombinations of familiar primitives~\cite{cg, lake2018generalization, yagcioglu2024sequential}. Robotics adds difficulty because the instruction tuple also changes the induced context and action distribution. Benchmarks such as VIMA-Bench, ClevrSkills, and RoboHiMan evaluate held-out combinations of objects, attributes, skills, or long-horizon task structures~\cite{jiang2023vimageneralrobotmanipulation, haresh2024clevrskills, chen2025robohiman}. Our work studies the data-side mechanism behind these failures: we decompose the generalization gap into \textit{marginal instruction shift}, \textit{instruction-compositional shift}, and \textit{context--action shift}, and test how training support controls these terms.

\paragraph{Data Coverage and Combinatorial Designs.}
Recent work reduces robot data cost by scaling demonstrations, generating trajectories, or varying environmental factors. MimicGen increase data coverage by object-centric subtask motion transformation~\cite{mandlekar2023mimicgen}. Closest to our motivation, Gao et al.~\cite{gao2024efficient} study efficient data collection by heuristically varying environmental factors. We instead study sequential instruction factors and ask how to design training coverage. This connects to design-of-experiments and combinatorial testing, where orthogonal and covering arrays cover low-order factor interactions without enumerating the full Cartesian product~\cite{kacker1991taguchi, hedayat2012orthogonal, kuhn2013introduction}. 

\paragraph{Modularity and Hierarchy.}
Modularity and hierarchy are long-standing inductive biases for long-horizon control, spanning options and hierarchical reinforcement learning~\cite{bacon2017option, nachum2018data}, task-and-motion planning~\cite{garrett2021integrated}, and language-guided planner-controller systems~\cite{ahn2022saycan, pi0.5, hirobot, geminirobotics}. These systems can reduce coupling between inactive instructions and current actions by exposing an active stage or subgoal. Our analysis formalizes this benefit through a stage-wise modular policy, which removes the explicit instruction-compositional term from the gap. However, modularity alone is insufficient: inactive instructions can still determine future contexts, and dependent tasks require the planner or dataset to communicate cross-stage relations. Our contribution is therefore a data coverage principle for deciding when sparse instruction data is enough and when it leaves systematic relational errors.

%% file: sections/prob.tex
\section{Problem Statement}
We formalize sequential robot tasks as the execution of an ordered $n$-tuple of discrete subtask indices $l = (l_1, \dots, l_n) \in \mathcal{L}_1 \times \dots \times \mathcal{L}_n:=\mathcal{L}$, where each $\mathcal{L}_j = \{0, \dots, m\}$. Let $z \in \mathcal{Z}$ denote the time-$t$ context (e.g., observation and robot state) and $a \in \mathcal{A}$ be the robot action. Robot policy is modeled as a conditional action distribution, $p(a \mid z, l_1, \dots, l_n)$. The execution is naturally decomposed into stages governed by a stage indicator $\sigma: \mathcal{Z} \to \{1, \dots, n\}$, such that $\sigma(z) = i$ signifies that subtask $l_i$ is currently active. 

\subsection{Compositional Generalization Gap}
We assume the training data are sampled from a joint distribution $p(z,a,l)$, where $l=(l_1,\dots,l_n)\in \mathcal{L}_1\times\cdots\times\mathcal{L}_n$ denotes a compositional instruction. The training support is
\begin{equation}
\label{eq:train-set}
\begin{aligned}
    E_{\mathrm{train}} :&= \operatorname{supp}(p(l)) \\
    &= \left\{ l \in \mathcal{L}_1 \times \dots \times \mathcal{L}_n \;\big|\; p(l) > 0 \right\}.
\end{aligned}
\end{equation}
To model compositional generalization, we introduce a test-time joint distribution $q(l)$ over valid recombinations of familiar factors, with support
$E_{\mathrm{total}} := \operatorname{supp}(q)$. The OOD set is then
\begin{equation}
E_{\mathrm{ood}} := E_{\mathrm{total}} \setminus E_{\mathrm{train}}.
\end{equation}

We define the expected risk $\mathcal{R}$ over a distribution $\mathcal{D} \in \{p, q\}$ as:
\begin{equation}
    \mathcal{R}_{\mathcal{D}}(\theta) := \int_{\mathcal Z \times \mathcal A \times \mathcal L} \mathcal{D}(z,a,l) L_\theta(z,a,l)\,dz\,dz\,dl.
\end{equation}
The \textit{compositional generalization gap} quantifies the performance degradation caused by the distribution shift from training to testing:
\begin{equation}
\label{eq: compostional generalization gap}
\begin{aligned}
    \Delta_q(\theta) :=& \mathcal{R}_{q}(\theta) - \mathcal{R}_{p}(\theta) 
    =\int_{\mathcal Z \times \mathcal A \times \mathcal L} \bigl(q(z,a,l)-p(z,a,l)\bigr)\, L_\theta(z,a,l)\,dz\,dz\,dl 
\end{aligned}
\end{equation}

%% file: sections/theorem.tex
\section{Theoretical Analysis}
\label{sec: theoretical analysis}
We begin with Proposition~\ref{prop:general_gap_bound} (Proof is provided in Appendix~\ref{app: theoretical analysis}), which identifies three distinct sources of the compositional generalization gap. The decomposition is centered on the instruction sequence \(l\), since the gap arises from a mismatch between the training and test instruction distributions. 

\begin{proposition}[General upper bound on the compositional generalization gap]
\label{prop:general_gap_bound}
Assume the loss is measurable and uniformly bounded on
\(\operatorname{supp}(p)\cup \operatorname{supp}(q)\), i.e.,
\[
0 \le L_\theta(z,a,l) \le M
\qquad
\forall (z,a,l)\in \operatorname{supp}(p)\cup \operatorname{supp}(q).
\]
Then, for any fixed \(i \in \{1,\dots,n\}\),
\begin{equation}
\label{eq:general_gap_bound}
\begin{aligned}
|\Delta_q(\theta)|
\le M \Big(
&\|q(l_i)-p(l_i)\|_1 + \mathbb{E}_{l_i \sim p(l_i)}
\big[\|q(l_{-i}\mid l_i)-p(l_{-i}\mid l_i)\|_1\big] \\
&+ \mathbb{E}_{l \sim p(l)}
\big[\|q(z,a\mid l)-p(z,a\mid l)\|_1\big]
\Big).
\end{aligned}
\end{equation}
where 
\[
\|\mu-\nu\|_1 := \int_{\mathcal X} |\mu(x)-\nu(x)|\,dx.
\]
for two two distributions \(\mu,\nu\).
\end{proposition}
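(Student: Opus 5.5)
The plan is a telescoping decomposition of the joint densities along the chain \(l_i \to l_{-i} \to (z,a)\), followed by the triangle inequality and the uniform bound on the loss. First, bound the gap by the total variation of the joints. Since \(0\le L_\theta\le M\) on \(\operatorname{supp}(p)\cup\operatorname{supp}(q)\) and \(q-p\) vanishes outside this set,
\[
|\Delta_q(\theta)| \le \int |q(z,a,l)-p(z,a,l)|\,L_\theta(z,a,l) \le M\,\|q(z,a,l)-p(z,a,l)\|_1 .
\]
It therefore suffices to show that \(\|q(z,a,l)-p(z,a,l)\|_1\) is at most the sum of the three terms in \eqref{eq:general_gap_bound}.

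Second, factor both joints as \(q(z,a,l)=q(l_i)\,q(l_{-i}\mid l_i)\,q(z,a\mid l)\), and likewise for \(p\). Then insert the two hybrid densities
\[
h_1 := q(l_i)\,p(l_{-i}\mid l_i)\,p(z,a\mid l),\qquad h_2 := p(l_i)\,q(l_{-i}\mid l_i)\,p(z,a\mid l),
\]
or a suitable variant. The hybrids are ordered so that each difference peels off exactly one factor, and the weighting distribution in each term is the one stated, namely \(p(l_i)\) or \(p(l)\). This ordering is where care is needed. Writing \(q-p\) with the outermost factor on the \(q\)-side gives the telescoping decomposition
\[
q-p=\big[q(l_i)-p(l_i)\big]q(l_{-i}\mid l_i)q(z,a\mid l)+p(l_i)\big[q(l_{-i}\mid l_i)-p(l_{-i}\mid l_i)\big]q(z,a\mid l)+p(l)\big[q(z,a\mid l)-p(z,a\mid l)\big].
\]
In this form the remaining factors attached to the first difference are \(q\)-conditionals, and the weights in front of the later differences are \(p\)-marginals, as the statement demands. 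Checking that this identity telescopes correctly is routine; the middle and last terms combine because \(p(l_i)p(l_{-i}\mid l_i)=p(l)\).

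Third, apply the triangle inequality and integrate term by term, using that each conditional density integrates to one. The first term gives \(\int|q(l_i)-p(l_i)|\,dl_i\,\cdot 1\cdot 1=\|q(l_i)-p(l_i)\|_1\). The second gives \(\int p(l_i)\int |q(l_{-i}\mid l_i)-p(l_{-i}\mid l_i)|\,dl_{-i}\,dl_i\), which is exactly the expectation over \(p(l_i)\). The third gives \(\mathbb{E}_{l\sim p(l)}\|q(z,a\mid l)-p(z,a\mid l)\|_1\). Multiplying by \(M\) completes the bound.

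The main obstacle is definitional rather than analytic. The conditionals \(q(l_{-i}\mid l_i)\) and \(q(z,a\mid l)\) must make sense where the \(p\)-weights are positive but \(q\) puts no mass, and conversely. Since the terms are weighted by \(p\), \(q(\cdot\mid l)\) must be defined for \(l\in E_{\mathrm{train}}\) where \(q(l)\) may vanish. I would handle this by fixing arbitrary versions of the conditional densities (any probability density) on null sets. The factorization \(q(z,a,l)=q(l_i)q(l_{-i}\mid l_i)q(z,a\mid l)\) holds regardless, because the outer factor is zero there. Each conditional still integrates to one, which is all the third step uses. With sums in place of integrals on the discrete \(\mathcal{L}\), measurability is immediate.
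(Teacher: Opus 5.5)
Your proof is correct and is essentially the paper's: the telescoping identity you display uses exactly the paper's intermediate densities \(p(l_i)q(l_{-i}\mid l_i)q(z,a\mid l)\) and \(p(l_i)p(l_{-i}\mid l_i)q(z,a\mid l)\), followed by the triangle inequality and \(0\le L_\theta\le M\); pulling \(M\) out before decomposing, rather than bounding each of the three terms with \(M\) as the paper does, makes no difference. Two small points: the hybrids \(h_1,h_2\) you first name do not telescope one factor at a time and should be replaced by those two densities, and your choice of arbitrary versions of the \(q\)-conditionals on \(q\)-null sets handles a definitional issue that the paper's proof skips.
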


From Proposition~\ref{prop:general_gap_bound}, bounded training loss alone does not imply good compositional generalization. Rather, a small gap requires simultaneous control of three discrepancies. The first term, \(\|q(l_i)-p(l_i)\|_1\), measures \textit{marginal shift} in the subtask instruction \(l_i\), which is unavoidable without broader training coverage. The second term, \(\mathbb{E}_{l_i\sim p(l_i)}[\|q(l_{-i}\mid l_i)-p(l_{-i}\mid l_i)\|_1]\), captures \textit{instruction compositional shift}, namely whether familiar subtasks are recombined in unfamiliar ways. The third term, \(\mathbb{E}_{l\sim p(l)}[\|q(z,a\mid l)-p(z,a\mid l)\|_1]\), measures \textit{context--action shift} conditioned on the full instruction sequence caused by previous term. The latter two terms are amplified when the policy entangles the active subtask with irrelevant inactive instructions. Hence, once marginal coverage is ensured, compositional generalization depends primarily on how the policy uses instruction structure. This motivates policy which can isolate stage-relevant instruction information while suppressing unnecessary dependence on \(l_{-i}\).

\subsection{Stage-wise Modular Policy}
The preceding analysis shows that compositional generalization depends critically on instruction structure. This motivates the hierarchical parameterization adopted in systems such as SayCan~\cite{ahn2022icanisay}, \(\pi_{0.5}\)~\cite{pi0.5}, and Gemini Robotics~\cite{geminirobotics}, which we term a \emph{stage-wise modular policy}. Under this formulation, a high-level oracle identifies the active stage, and a low-level controller conditions only on the current context and the corresponding subtask instruction. Formally, let \(\sigma:\mathcal Z\to\{1,\dots,n\}\) denote the stage indicator. Then, at context \(z\),
\[
\pi_\theta(a\mid z,l)=\pi_{\theta,\sigma(z)}(a\mid z,l_{\sigma(z)}).
\]
This modularization isolates the active subtask and leads to a sharper stage-wise generalization bound.

Under this modular parameterization, the loss can be written in stage-wise form as
\[
L_\theta(z,a,l)
=
\sum_{i=1}^n \mathbf 1\{\sigma(z)=i\}\,\ell_{i,\theta}(z,a,l_i),
\]
where \(\ell_{i,\theta}(z,a,l_i)\) denotes the loss incurred at stage \(i\) and depends only on the active subtask \(l_i\).

\begin{corollary}[Bound for stage-wise modular policy]
\label{cor:modular_gap_bound}
For a stage-wise modular policy, we assume that for each stage \(i\),
\[
0 \le \ell_{i,\theta}(z,a,l_i) \le M_i
\quad
\forall (z,a,l_i)\in \operatorname{supp}(p_i)\cup \operatorname{supp}(q_i).
\]
where $q_i=\int q(z,a,l)dl_{-i}$ and $p_i=\int p(z,a,l)dl_{-i}$. Then
\begin{equation}
\label{eq:modular_gap_bound}
\begin{aligned}
|\Delta_q^{\mathrm{m}}(\theta)|
\le
\sum_{i=1}^n M_i \Big(
&\|q(l_i)-p(l_i)\|_1 +
\mathbb{E}_{l_i\sim p}\|q(z,a\mid l_i)-p(z,a\mid l_i)\|_1\Big).
\end{aligned}
\end{equation}
\end{corollary}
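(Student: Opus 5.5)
We start from the stage-wise form of the loss and split the gap stage by stage. Substituting $L_\theta(z,a,l)=\sum_{i=1}^n \mathbf 1\{\sigma(z)=i\}\,\ell_{i,\theta}(z,a,l_i)$ into \eqref{eq: compostional generalization gap} and using linearity of the integral gives
\[
\Delta_q^{\mathrm m}(\theta)=\sum_{i=1}^n \int \bigl(q(z,a,l)-p(z,a,l)\bigr)\,\mathbf 1\{\sigma(z)=i\}\,\ell_{i,\theta}(z,a,l_i)\,dz\,da\,dl .
\]
In the $i$-th summand the integrand depends on $l$ only through $l_i$. We therefore integrate out $l_{-i}$ first (Fubini is justified because the integrand is bounded and measurable). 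This replaces $q(z,a,l)$ and $p(z,a,l)$ by the stage marginals $q_i(z,a,l_i)$ and $p_i(z,a,l_i)$ defined in the statement. Each summand thus becomes
\[
\Delta_i:=\int (q_i-p_i)(z,a,l_i)\,\tilde\ell_i(z,a,l_i)\,dz\,da\,dl_i,
\qquad
\tilde\ell_i:=\mathbf 1\{\sigma(z)=i\}\,\ell_{i,\theta}.
\]
The triangle inequality then gives $|\Delta_q^{\mathrm m}|\le\sum_i|\Delta_i|$. The modular structure earns its sharper bound precisely at this step: the compositional term involving $l_{-i}$ disappears because $l_{-i}$ has been marginalized before any shift is measured.

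Next we bound each $|\Delta_i|$ by reproducing, on the reduced variable set $(z,a,l_i)$, the argument behind Proposition~\ref{prop:general_gap_bound}. Since $0\le\tilde\ell_i\le M_i$ on $\operatorname{supp}(p_i)\cup\operatorname{supp}(q_i)$, we factor $q_i(z,a,l_i)=q(l_i)\,q(z,a\mid l_i)$ and likewise for $p_i$. We then add and subtract the cross term $q(l_i)\,p(z,a\mid l_i)$, or equivalently $p(l_i)\,q(z,a\mid l_i)$, choosing whichever makes the conditional shift weighted by $p(l_i)$, as in the statement. Concretely,
\[
q_i-p_i=\bigl(q(l_i)-p(l_i)\bigr)\,q(z,a\mid l_i)+p(l_i)\bigl(q(z,a\mid l_i)-p(z,a\mid l_i)\bigr).
\]
For the first piece, integrating $|q(l_i)-p(l_i)|\,q(z,a\mid l_i)\,M_i$ over $(z,a)$ uses $\int q(z,a\mid l_i)\,dz\,da=1$ and yields $M_i\|q(l_i)-p(l_i)\|_1$. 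For the second piece, bounding the loss by $M_i$ and integrating gives $M_i\,\mathbb E_{l_i\sim p}\|q(z,a\mid l_i)-p(z,a\mid l_i)\|_1$. Summing these two contributions over $i$ yields \eqref{eq:modular_gap_bound}.

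The main technical care lies in the conditional densities when $q(l_i)=0$ or $p(l_i)=0$. With the decomposition chosen above, $q(z,a\mid l_i)$ appears only multiplied by $q(l_i)$ or $p(l_i)$. On the set where $p(l_i)>0$ but $q(l_i)=0$ it is undefined, so we fix an arbitrary probability density there by convention; the first piece still integrates to $|q(l_i)-p(l_i)|$ for every choice, and the second piece merely uses this convention inside the expectation. We also check that the pointwise bound $M_i$ is needed only where $q_i$ or $p_i$ is positive, which holds because $\tilde\ell_i\ge0$ and every term is weighted by a density supported in $\operatorname{supp}(p_i)\cup\operatorname{supp}(q_i)$. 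Measurability of $\tilde\ell_i$ follows from measurability of $\sigma$ and $\ell_{i,\theta}$, which we assume as in Proposition~\ref{prop:general_gap_bound}.
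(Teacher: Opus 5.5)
Your proposal is correct and follows the paper's proof essentially step for step: you substitute the stage-wise loss, marginalize $l_{-i}$ to reduce to $q_i-p_i$, split $q_i-p_i=\bigl(q(l_i)-p(l_i)\bigr)q(z,a\mid l_i)+p(l_i)\bigl(q(z,a\mid l_i)-p(z,a\mid l_i)\bigr)$, and bound each piece by $M_i$; keeping $\mathbf 1\{\sigma(z)=i\}$ inside $\tilde\ell_i$ is slightly more careful than the paper, which silently drops the indicator. One small correction to your edge-case remark: on $\{q(l_i)=0<p(l_i)\}$ the convention for $q(\cdot\mid l_i)$ should not be arbitrary but supported in $\operatorname{supp}(p_i)$, e.g.\ $p(\cdot\mid l_i)$, because otherwise the two pieces, bounded separately, can pick up loss values where $\ell_{i,\theta}\le M_i$ is not assumed.
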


Corollary~\ref{cor:modular_gap_bound} (proof is provided in Appendix~\ref{app: theoretical analysis}) shows that stage-wise modularity removes the explicit \textit{instruction compositional shift} and yields a simpler stage-wise training loss. this simplification depends critically on access to such an indicator, or on learning it accurately. More fundamentally, removing \(l_{-i}\) from the policy input does not remove its influence on the induced context--action distribution \(p(z,a\mid l_i)\). Unseen values of \(l_{-i}\) can still generate novel contexts, under which the policy still fail to generalize. Moreover, such modularization may induce an information bottleneck: if the optimal action depends on instruction factors outside \(l_i\), then \(p(a\mid z,l_i)\) is inherently ambiguous.

%% file: sections/exp_v1.tex
\section{Experiments}
\label{sec:experiments}
Our experiments study the following question: \emph{what coverage of the instruction space is necessary for compositional generalization in sequential robot tasks?} Collecting demonstrations for every instruction tuple is sufficient but scales poorly, as the number of tuples grows exponentially with the number of subtasks. Proposition~\ref{prop:general_gap_bound} suggests a theoretical view: since the generalization gap is bounded by marginal, instruction-compositional, and context--action shifts, controlling these terms should matter more than exhaustive enumeration. Guided by this decomposition, we study three questions: \emph{Q1: Is full tuple coverage necessary?} \emph{Q2: What is missing when sparse coverage fails?} \emph{Q3: What coverage is needed for dependent instructions?}

\begin{figure}[t]
    \centering
    \includegraphics[width=0.95\linewidth]{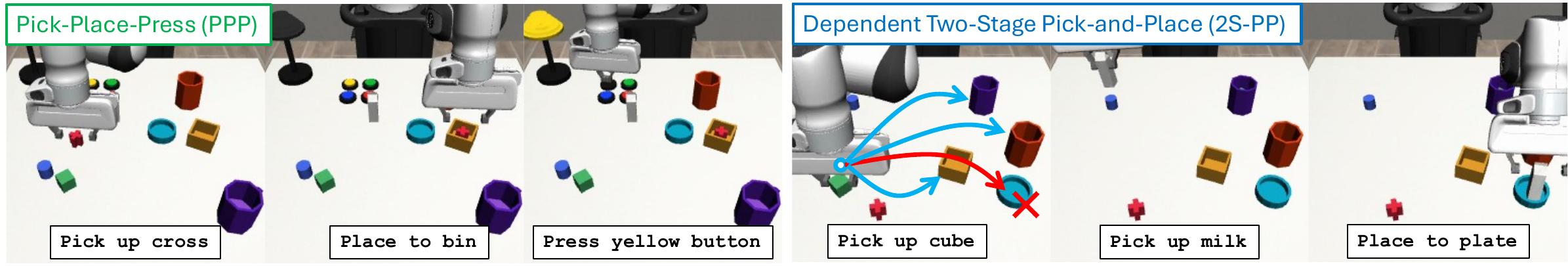}
    \caption{
    Illustration of Pick-Place-Press and Dependent Two-Stage Pick-and-Place tasks.
    }
    \label{fig:cg_pattern}
\end{figure}

\begin{figure}[t]
    \centering
    \begin{subfigure}[t]{0.31\linewidth}
        \centering
        \includegraphics[width=\linewidth]{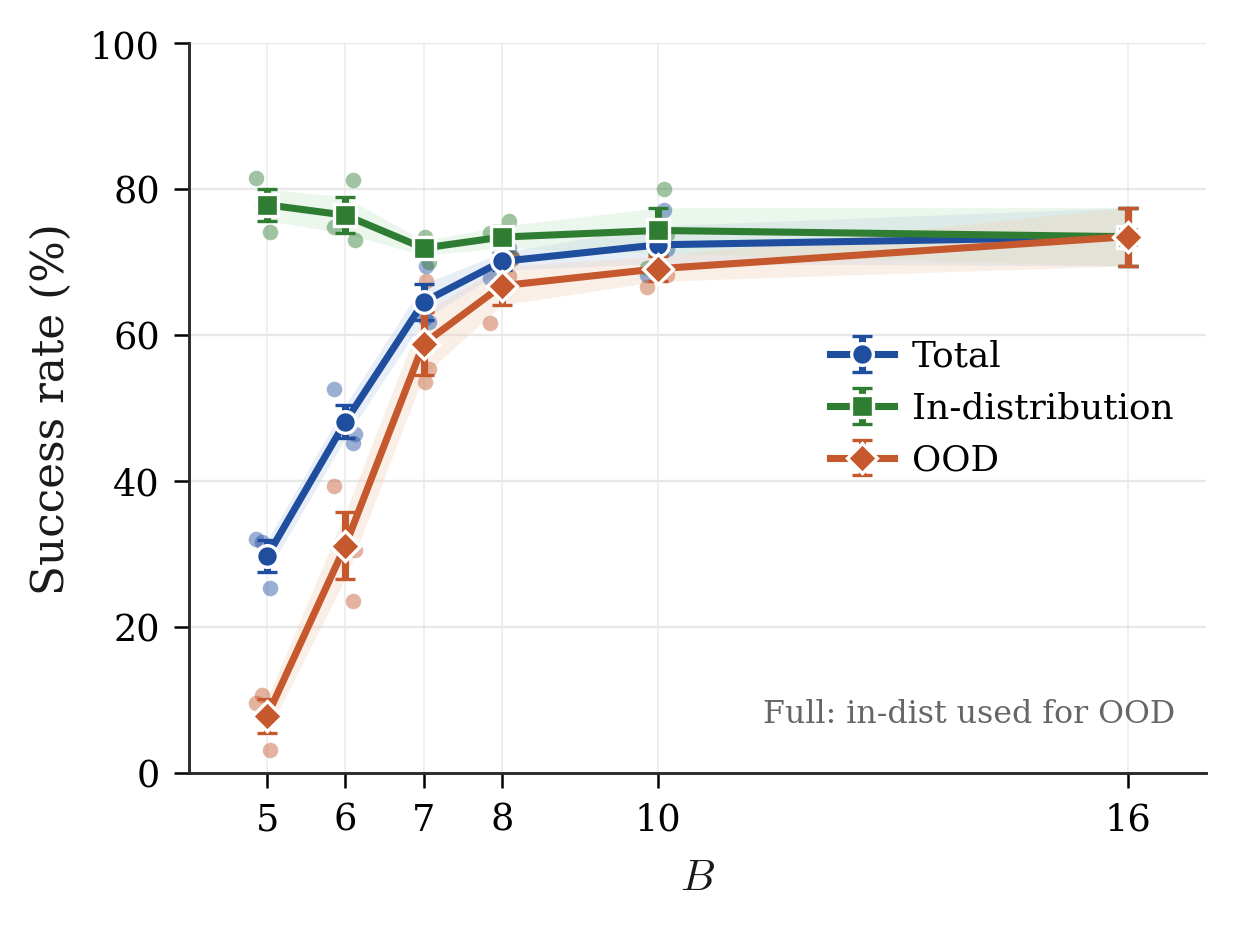}
        \caption{Pick-and-Place (PP).}
        \label{fig:pp_budget}
    \end{subfigure}
    \hfill
    \begin{subfigure}[t]{0.31\linewidth}
        \centering
        \includegraphics[width=\linewidth]{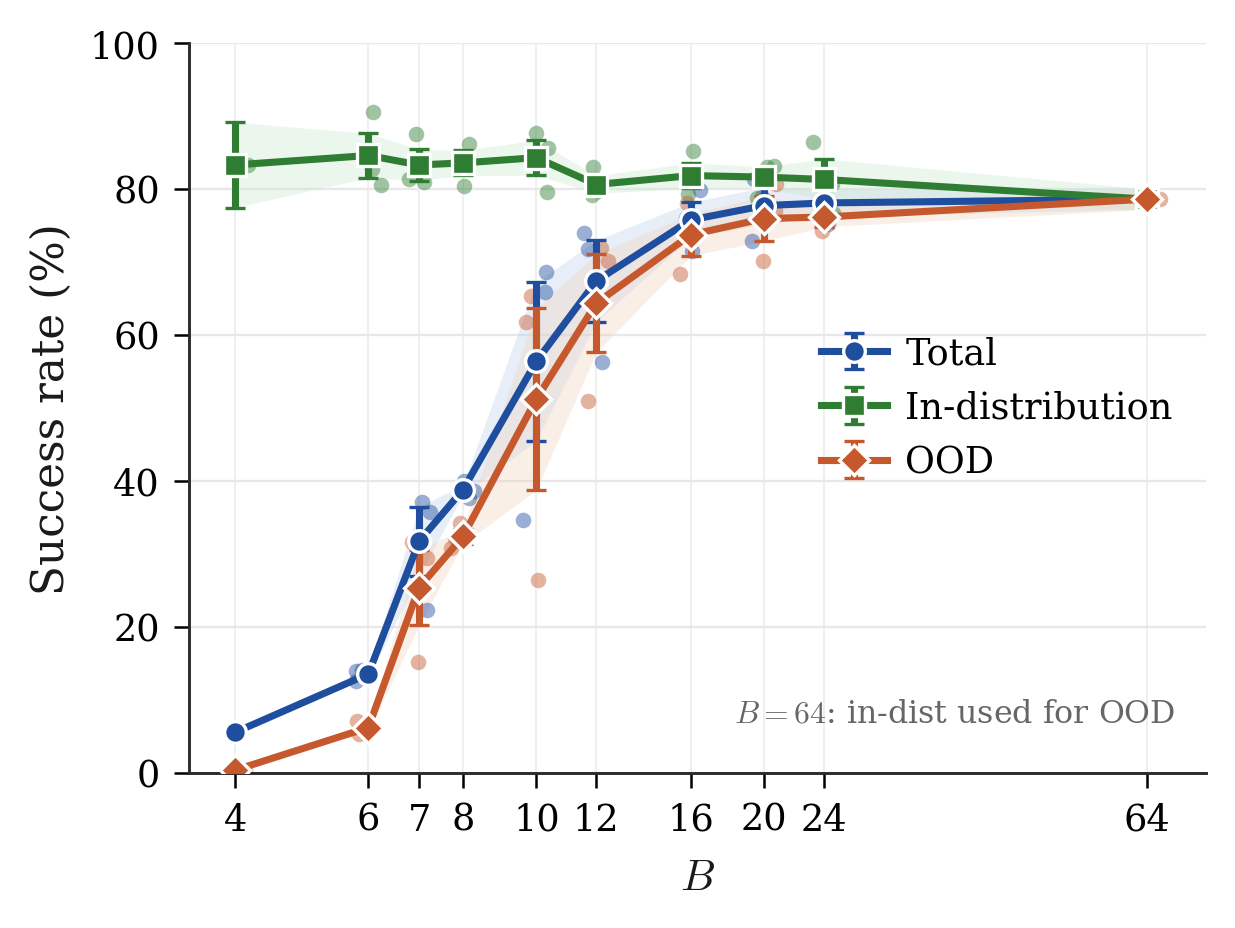}
        \caption{Pick-Place-Press (PPP).}
        \label{fig:ppp_budget}
    \end{subfigure}
    \hfill
    \begin{subfigure}[t]{0.31\linewidth}
        \centering
        \includegraphics[width=\linewidth]{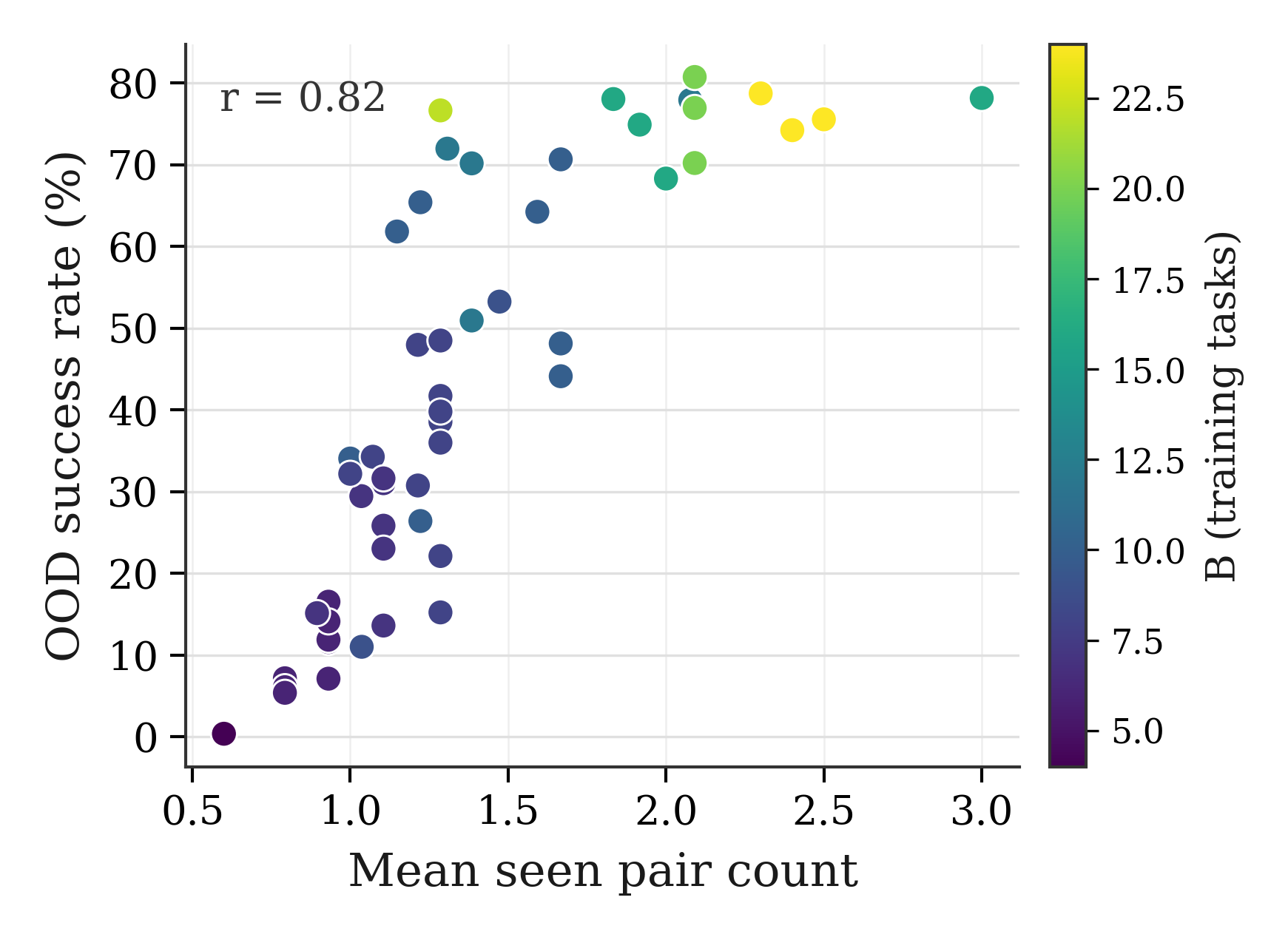}
        \caption{Seen-pair count in PPP task.}
        \label{fig:ppp_seen_pairs}
    \end{subfigure}
    \caption{Full tuple coverage is unnecessary. Higher seen pair count predicts stronger OOD success.
    }
    \label{fig:coverage_ood}
    \vspace{-12pt}
\end{figure}

\subsection{Experimental Setup}
\label{sec:exp_setup}

\paragraph{Task Design.}
We design three tasks with increasing compositional complexity in robomimic~\citep{robomimic2021}.

\begin{enumerate}
    \item \textbf{Pick-and-Place (PP).}
    The instruction is \(l=(o,c)\), where \(o\in\{1,\dots,4\}\) is the object and \(c\in\{1,\dots,4\}\) is the container. The robot picks object \(o\) and places it into container \(c\). The full task space contains \(|E_{\mathrm{total}}|=4\times4=16\) tasks.

    \item \textbf{Pick-Place-Press (PPP).} The instruction is \(l=(o,c,b)\), where \(b\in\{1,\dots,4\}\) denotes the button color. The robot places object \(o\) into container \(c\), then presses button \(b\), activating the lamp with the corresponding color. The full task space contains \(|E_{\mathrm{total}}|=4\times4\times4=64\) tasks. The three subtasks are independent. 
    
    \item \textbf{Dependent Two-Stage Pick-and-Place (2S-PP).} The full task is \(l=(o_1,c_1,o_2,c_2)\), with constraints \(o_1\neq o_2\) and \(c_1\neq c_2\). The robot first places \(o_1\) into \(c_1\), then places \(o_2\) into \(c_2\). The full task space contains \(|E_{\mathrm{total}}|=4\times4\times3\times3=144\) tasks. \textbf{However, the policy only receives the partial instruction \(l=(o_1,o_2,c_2)\).} Thus, it must infer the missing container \(c_1\) for \(o_1\) while satisfying \(c_1\neq c_2\), making the task dependent and more challenging.
\end{enumerate}

\paragraph{Data and Policy.} We use a customized motion planner to generate 500, 300, and 100 demonstrations per task for PP, PPP, and 2S-PP, yielding 8,000, 19,200, and 14,400 demonstrations in total, respectively. All experiments use the same transformer-based behavior cloning policy with identical observation and action spaces. At each time step, the policy receives privileged state information, including the 6D gripper pose, and visual observations from third-person and eye-in-hand cameras. The action space consists of end-effector position and orientation deltas, together with gripper actuation. We use DINOv2~\cite{oquab2024dinov2} as the vision encoder and a transformer-based flow-matching policy. Further details are provided in Appendix~\ref{app:network}.

\paragraph{Evaluation Metrics.} We report three success-rate metrics: \textbf{Total SR}, computed over all tasks in \(E_{\mathrm{total}}\); \textbf{ID SR}, computed over training tasks \(E_{\mathrm{train}}\); and \textbf{OOD SR}, computed over held-out recombinations \(E_{\mathrm{ood}}=E_{\mathrm{total}}\setminus E_{\mathrm{train}}\). Unless otherwise noted, each evaluation pools results over three training seeds and 30 trials per task. For 2S-PP, we also report the \textbf{same-container violation rate}, measuring failures that violate the cross-stage container constraint after the first placement.

\subsection{Q1: Is Full Tuple Coverage Necessary?}
\label{sec:exp_full_combinations}

We first test whether full Cartesian coverage is required for compositional generalization. For each training budget \(B=|E_{\mathrm{train}}|\), we sample training tasks while ensuring that every subtask instruction value appears at least once. This controls the \textit{marginal shift term} \(\|q(l_i)-p(l_i)\|_1\) in Proposition~\ref{prop:general_gap_bound}. Therefore, poor OOD SR cannot be attributed merely to unseen subtask instructions.

Figure~\ref{fig:coverage_ood} shows that small training sets still yield poor OOD success despite marginal coverage. However, full tuple coverage is also unnecessary. PP saturates around \(B=8\), while PPP saturates around \(B=16\), far below the full \(16\)- and \(64\)-task Cartesian products, but achieving similar success. Another finding is that when \(B\) is large enough, low training loss may generalize to unseen instruction tuples and their induced contexts, and the \textit{context-action shift} appears insignificant; both phenomena are likely because the policy learns to focuse on subtask-relevant information. 

\begin{wrapfigure}{r}{0.58\linewidth}
    \vspace{-0.5em}
    \centering
    \includegraphics[width=\linewidth]{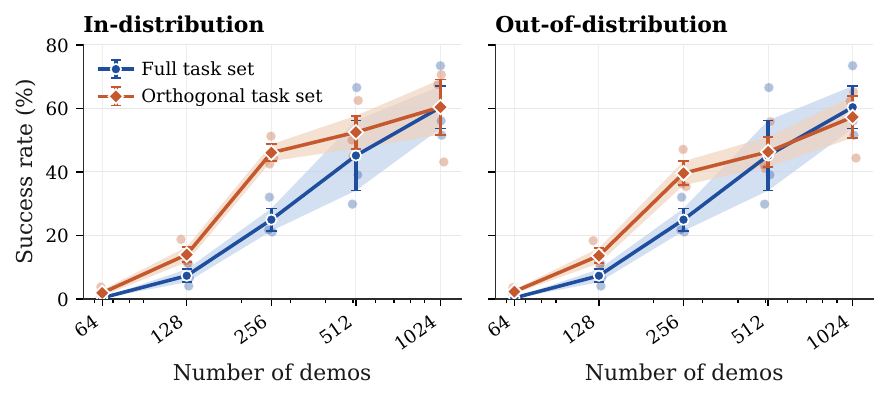}
    \caption{
    Orthogonal set is more effective than full-set training when demonstrations are limited.
    }
    \label{fig:budget}
    \vspace{-1em}
\end{wrapfigure}

To identify the relevant structure, we analyze the PPP task at the level of instruction pairs. For each held-out task \(l=(o,c,b)\), we define its \emph{seen pair count} as the number of observed pairs among \((o,c)\), \((o,b)\), and \((c,b)\). From Figure~\ref{fig:ppp_seen_pairs}, OOD success increases with this count. Figure~\ref{fig:cg_pattern} illustrates representative examples where OOD task success correlates strongly with the number of seen pairs. These results also support Proposition~\ref{prop:general_gap_bound}: pairwise coverage mitigates the instruction-compositional discrepancy $q(l_{-i}\mid l_i)-p(l_{-i}\mid l_i)$ and consequently reduces the induced context--action shift.

\begin{figure}[t]
    \centering
    \includegraphics[width=0.99\linewidth]{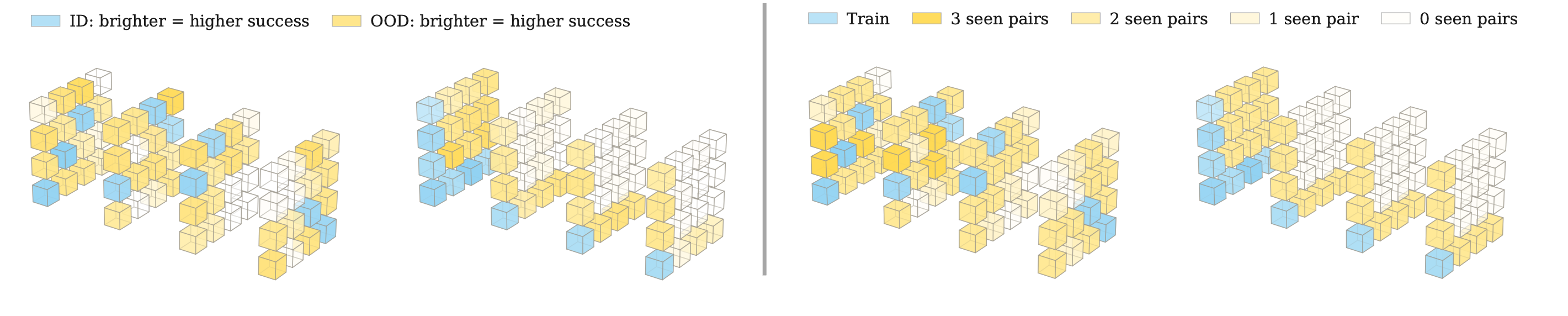}
    \caption{
    Representative examples showing that OOD tasks with higher seen-pair coverage tend to achieve higher success rates. More results are shown in Appendix~\ref{app:q1}.
    }
    \label{fig:cg_pattern}
    \vspace{-8pt}
\end{figure}

This observation motivates an instruction-coverage principle: greater pairwise coverage in the training set improves OOD generalization. Accordingly, the orthogonal \(16\)-task design, inspired by design-of-experiments theory~\citep{kacker1991taguchi, hedayat2012orthogonal}, ensures that every pairwise combination of instruction factors appears in training. Experiments show that, with only one quarter of the full \(64\)-task space, it preserves the relevant pairwise coverage and achieves \(78.2\%\) OOD success, close to full-task training. Additional results, including 53 different PPP training-set configurations, are provided in Appendix~\ref{app:q1}.

\paragraph{Fixed-budget comparison.}
The same principle has a practical implication for data collection. Under a fixed demonstration budget, exhaustive task coverage allocates few demonstrations to each tuple, whereas an orthogonal task set concentrates demonstrations on structurally informative combinations. As shown in Figure~\ref{fig:budget}, orthogonal-set training performs better at low and medium budgets, and the gap narrows only when the total number of demonstrations becomes large. Therefore, avoiding full enumeration can improve both task efficiency and sample efficiency.

\subsection{Q2: What Is Missing When Sparse Coverage Fails?}
\label{sec:exp_instruction_steering}

The preceding experiments show that sparse task sets can fail even when all subtask instruction values are observed. Proposition~\ref{prop:general_gap_bound} points to two remaining sources of failure: \textit{instruction-compositional shift} and \textit{context--action shift}. This raises an operational question: does the policy fail from lack of subtask skills, or from poor skill routing under unseen instruction combinations?


\begin{figure}[t]
    \centering
    \includegraphics[width=0.9\linewidth]{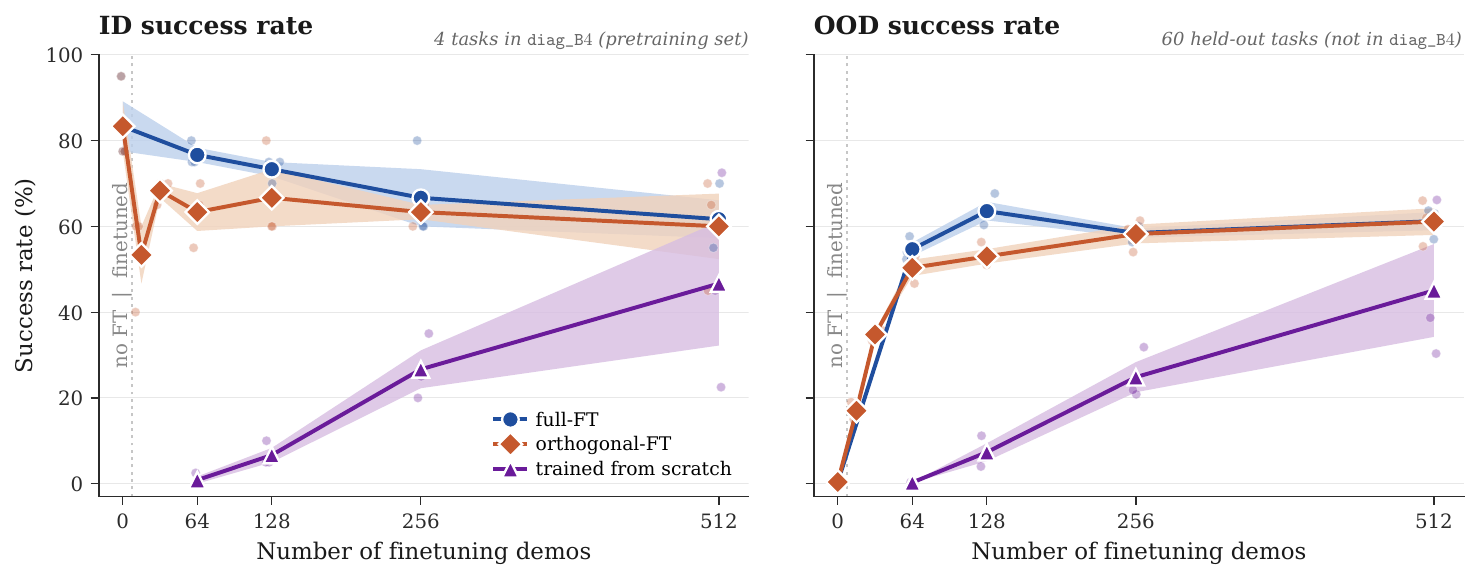}
    \caption{
    Finetuning from sparse diagonal pretraining. Small amounts of broad-coverage data substantially improve OOD success, suggesting that sparse pretraining learns reusable skills but not reliable instruction steering, which can be acquired with limited additional data.
    }
    \label{fig:ft}
    \vspace{-12pt}
\end{figure}

To separate these hypotheses, we choose Pick-Place-Press (PPP) task and pretrain on diagonal sets, i.e., the minimal task set $(o,c,b)\in{(0,0,0),(1,1,1),(2,2,2),(3,3,3)}$, which covers all instruction marginals, and then finetune with a small number of demonstrations from broader task sets. Figure~\ref{fig:ft} shows that diagonal pretraining achieves high ID success but nearly zero OOD success. However, with only \textit{one} demonstration per full-set task, finetuning raises OOD success from $0.4\%$ to $54.7\%$, whereas scratch training with the same total budget remains substantially weaker. ID success decreases after finetuning, suggesting a trade-off induced by distributional reweighting rather than a loss of subtask competence. Finetuning on the orthogonal set shows a similar trend. We also evaluate pretraining on six-task sets and observe similar conclusions. Further analysis is provided in Appendix~\ref{app:q2}.

These results indicate that sparse pretraining learns reusable subtask skills, but not the mapping from unseen instruction tuples to the appropriate behavior. That means sparse coverage pretraining leaves large instruction-compositional discrepancies; a small amount of broad-coverage finetuning can substantially reduce this gap.


\subsection{Q3: What Coverage Is Needed for Dependent Instructions?}
\label{sec:exp_dependent_two_stage}

Section~\ref{sec:exp_full_combinations} shows that high pairwise coverage in the training task set leads to strong OOD performance. We next ask when such coverage is necessary and whether it should apply to all instruction pairs.


\begin{figure}[t]
    \centering
    \includegraphics[width=0.99\linewidth]{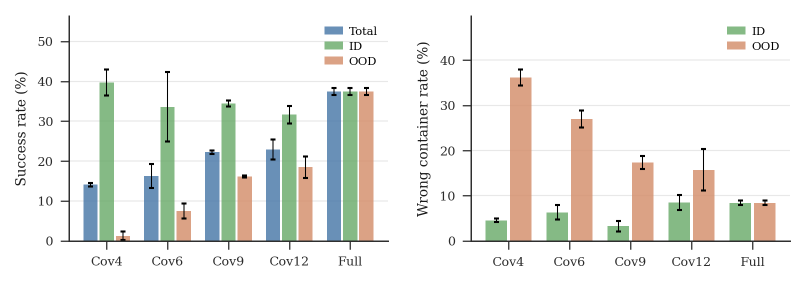}
    \caption{
    Performance and same-container violations in 2S-PP. 
    Broader dependent-pair coverage improves success and reduces relational errors (e.g., put object $o_1$ into container $c_2$).
    }
    \label{fig:cont0}
    \vspace{-12pt}
\end{figure}

We instantiate this setting on Dependent Two-Stage Pick-and-Place (2S-PP). The robot must place two objects into different containers, while the policy observes only $l=(o_1,o_2,c_2)$. Hence, the first placement is not independently specified; it must be selected in relation to the second placement. We use four values of $o_1\in\{1,2,3,4\}$ and three values of $c_2\in\{1,2,3\}$. The full task set contains $4\times3\times3=36$ tasks. Starting from the orthogonal design (12 training tasks) with all $12$ dependent pairs $(o_1,c_2)$ covered (Cov12), we reduce the number of observed dependent pairs to $9$, $6$, and $4$, yielding Cov9, Cov6, and Cov4, respectively, while keeping $12$ training tasks.

We define the same-container violation rate as the probability that the policy places $o_1$ into $c_2$, conditioned on a successful first placement. Figure~\ref{fig:cont0} shows that broader dependent-pair coverage improves OOD success and reduces same-container violations. This suggests that, when instruction factors are semantically dependent, their joint combinations should be more densely covered in the training set. Notably, fewer observed $(o_1,c_2)$ pairs yield higher ID success and a lower ID same-container violation rate. We attribute this to denser supervision per ID pair: with fewer observed $(o_1,c_2)$ pairs, each pair receives more demonstrations.

These results further reveal the mechanism behind the benefit of pairwise coverage. The required coverage is not arbitrary pairwise diversity, but coverage of the dependent instruction factors. This also explains why a purely stage-wise modular policy can be insufficient unless the high-level planner explicitly communicates the relational constraint to the low-level controller. Further analysis, including failure-case analysis, is provided in Appendix~\ref{app:q3}.

%% file: sections/app.tex
\newpage
\appendix

\section{Theoretical Analysis}
\label{app: theoretical analysis}
\begin{proposition}[General upper bound on the compositional generalization gap]
\label{prop:general_gap_bound}
Assume the loss is measurable and uniformly bounded on
\(\operatorname{supp}(p)\cup \operatorname{supp}(q)\), i.e.,
\[
0 \le L_\theta(z,a,l) \le M
\qquad
\forall (z,a,l)\in \operatorname{supp}(p)\cup \operatorname{supp}(q).
\]
Then, for any fixed \(i \in \{1,\dots,n\}\),
\begin{equation}
\label{eq:general_gap_bound}
\begin{aligned}
|\Delta_q(\theta)|
\le M \Big(
&\|q(l_i)-p(l_i)\|_1 \\
&+ \mathbb{E}_{l_i \sim p(l_i)}
\big[\|q(l_{-i}\mid l_i)-p(l_{-i}\mid l_i)\|_1\big] \\
&+ \mathbb{E}_{l \sim p(l)}
\big[\|q(z,a\mid l)-p(z,a\mid l)\|_1\big]
\Big).
\end{aligned}
\end{equation}
\end{proposition}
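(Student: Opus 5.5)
Since $0\le L_\theta\le M$, the gap is controlled by total variation: $|\Delta_q(\theta)|\le \int |q(z,a,l)-p(z,a,l)|\,L_\theta\,dz\,da\,dl \le M\,\|q(z,a,l)-p(z,a,l)\|_1$. The integrand vanishes off $\operatorname{supp}(p)\cup\operatorname{supp}(q)$, so the uniform bound there suffices. The proof therefore reduces to bounding the $L^1$ distance between the two joints. We do this by a telescoping (hybrid) argument along the chain-rule factorization
\[
\mathcal D(z,a,l)=\mathcal D(l_i)\,\mathcal D(l_{-i}\mid l_i)\,\mathcal D(z,a\mid l),\qquad \mathcal D\in\{p,q\}.
\]

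We swap one factor at a time, introducing two hybrid densities $h_1 := q(l_i)\,q(l_{-i}\mid l_i)\,q(z,a\mid l)=q(z,a,l)$, $h_2 := p(l_i)\,q(l_{-i}\mid l_i)\,q(z,a\mid l)$ and $h_3 := p(l_i)\,p(l_{-i}\mid l_i)\,q(z,a\mid l)$, with $h_4=p(z,a,l)$. The triangle inequality then gives $\|q-p\|_1\le \|h_1-h_2\|_1+\|h_2-h_3\|_1+\|h_3-h_4\|_1$. The three terms are handled as follows.

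\textbf{First term.} Integrating out $(z,a)$ and then $l_{-i}$ uses that conditionals integrate to one, giving $\|q(l_i)-p(l_i)\|_1$.

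\textbf{Second term.} Factor out $p(l_i)\ge 0$ and integrate $q(z,a\mid l)$ to one. This yields $\int p(l_i)\,\|q(l_{-i}\mid l_i)-p(l_{-i}\mid l_i)\|_1\,dl_i$, which is the expectation under $p(l_i)$.

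\textbf{Third term.} Factor out $p(l_i)p(l_{-i}\mid l_i)=p(l)$, giving $\mathbb E_{l\sim p(l)}\|q(z,a\mid l)-p(z,a\mid l)\|_1$.

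Multiplying through by $M$ gives \eqref{eq:general_gap_bound}. The ordering of the hybrids is chosen deliberately so that every expectation is taken under $p$, which matches the statement.

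The main obstacle is technical rather than conceptual: the conditionals $q(l_{-i}\mid l_i)$ and $q(z,a\mid l)$ are undefined where $q(l_i)=0$ or $q(l)=0$. This matters because $E_{\mathrm{train}}\not\subseteq E_{\mathrm{total}}$ in general, so $p$ may place mass on exactly these points. I would fix arbitrary versions of these conditionals there (any probability densities). The identity $h_1=q$ still holds, since such points carry zero $q$-mass. Each integration-to-one step also remains valid, because it only uses that the chosen version is a probability density. The same convention applies to $p$'s conditionals where $p(l)=0$, but those sets carry zero weight in the $p$-expectations anyway. With discrete $l$, all integrals over $\mathcal L$ are sums, and Fubini/Tonelli applies since every integrand is nonnegative.
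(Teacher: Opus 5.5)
Your proof is correct and is essentially the paper's argument: the paper adds and subtracts the same two hybrids $p(l_i)q(l_{-i}\mid l_i)q(z,a\mid l)$ and $p(l_i)p(l_{-i}\mid l_i)q(z,a\mid l)$ to write $\Delta_q(\theta)=A_i+B_i+C_i$ exactly, then bounds each piece using $0\le L_\theta\le M$. The only difference is that you apply the loss bound first, reducing to $M\|q-p\|_1$, and then telescope in $L^1$; this is slightly more robust, since it needs $L_\theta\le M$ only on $\operatorname{supp}(p)\cup\operatorname{supp}(q)$ and not on the supports of the hybrids, and your explicit choice of versions for the undefined conditionals covers a case the paper passes over.
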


\begin{proof}
Starting from
\[
\Delta_q(\theta)
=
\int_{\mathcal Z \times \mathcal A \times \mathcal L}
\bigl(q(z,a,l)-p(z,a,l)\bigr)\,L_\theta(z,a,l)\,dz\,da\,dl,
\]
and using the factorization \(l=(l_i,l_{-i})\), we have
\[
q(z,a,l)-p(z,a,l)
=
q(l_i)q(l_{-i}\mid l_i)q(z,a\mid l)
-
p(l_i)p(l_{-i}\mid l_i)p(z,a\mid l).
\]
Add and subtract the intermediate terms
\[
p(l_i)q(l_{-i}\mid l_i)q(z,a\mid l)
\quad\text{and}\quad
p(l_i)p(l_{-i}\mid l_i)q(z,a\mid l),
\]
to get
\begin{equation}
    \begin{aligned}
        q(z,a,l)-p(z,a,l) = & q(l_i)q(l_{-i}\mid l_i)q(z,a\mid l)
            -
            p(l_i)q(l_{-i}\mid l_i)q(z,a\mid l)\\
            &+ 
            p(l_i)q(l_{-i}\mid l_i)q(z,a\mid l)
            -
            p(l_i)p(l_{-i}\mid l_i)q(z,a\mid l)\\
            &+
            p(l_i)p(l_{-i}\mid l_i)q(z,a\mid l)
            -
            p(l_i)p(l_{-i}\mid l_i)p(z,a\mid l)
    \end{aligned}
\end{equation}

Then we have the exact decomposition
\begin{equation}
\label{eq:general_exact_decomp}
\Delta_q(\theta)=A_i+B_i+C_i,
\end{equation}
where
\[
A_i
=
\int
\bigl(q(l_i)-p(l_i)\bigr)\,
q(l_{-i}\mid l_i)\,
q(z,a\mid l)\,
L_\theta(z,a,l)\,dz\,da\,dl,
\]
\[
B_i
=
\int
p(l_i)\,
\bigl(q(l_{-i}\mid l_i)-p(l_{-i}\mid l_i)\bigr)\,
q(z,a\mid l)\,
L_\theta(z,a,l)\,dz\,da\,dl,
\]
\[
C_i
=
\int
p(l_i)\,
p(l_{-i}\mid l_i)\,
\bigl(q(z,a\mid l)-p(z,a\mid l)\bigr)\,
L_\theta(z,a,l)\,dz\,da\,dl.
\]

Using \(0\le L_\theta \le M\), each term is bounded as follows:
\[
|A_i|
\le
M
\int
|q(l_i)-p(l_i)|\,
q(l_{-i}\mid l_i)\,
q(z,a\mid l)\,dz\,da\,dl
=
M\|q(l_i)-p(l_i)\|_1,
\]
\[
|B_i|
\le
M
\int
p(l_i)\,
|q(l_{-i}\mid l_i)-p(l_{-i}\mid l_i)|\,
q(z,a\mid l)\,dz\,da\,dl
=
M\,
\mathbb{E}_{l_i\sim p(l_i)}
\big[
\|q(l_{-i}\mid l_i)-p(l_{-i}\mid l_i)\|_1
\big],
\]
and
\[
|C_i|
\le
M
\int
p(l)\,
|q(z,a\mid l)-p(z,a\mid l)|\,dz\,da\,dl
=
M\,
\mathbb{E}_{l\sim p(l)}
\big[
\|q(z,a\mid l)-p(z,a\mid l)\|_1
\big].
\]
Combining these three inequalities with \eqref{eq:general_exact_decomp} yields
\eqref{eq:general_gap_bound}.
\end{proof}

From Proposition~\ref{prop:general_gap_bound}, bounded training loss alone does not guarantee good compositional generalization. Instead, a small gap requires controlling three distinct discrepancies, each of which has a different interpretation and a different degree of controllability.

First, the marginal term \(\|q(l_i)-p(l_i)\|_1\) measures whether the active subtask \(l_i\) itself is covered by the training distribution. If test-time subtasks are absent from training, then no policy can be expected to generalize reliably, since the problem is one of missing support rather than model structure. This term is therefore fundamentally a data issue: it can only be reduced by expanding the training distribution to include the relevant subtasks.

Second, the compositional term
\[
\mathbb{E}_{l_i\sim p(l_i)}
\big[\|q(l_{-i}\mid l_i)-p(l_{-i}\mid l_i)\|_1\big]
\]
captures whether familiar subtasks are recombined with different surrounding instructions at test time. Unlike the first term, this shift arises even when each individual subtask has been observed during training. It reflects a failure to generalize across novel combinations of known factors, and thus directly characterizes the core challenge of compositional generalization.

Third, the conditional term
\[
\mathbb{E}_{l\sim p(l)}
\big[\|q(z,a\mid l)-p(z,a\mid l)\|_1\big]
\]
measures the residual shift in context--action behavior conditioned on the full instruction sequence. In practice, this term can grow when the learned policy becomes overly sensitive to irrelevant correlations among instruction components, so that unfamiliar recombinations of otherwise familiar subtasks induce different action behavior.

This decomposition suggests that the first term is unavoidable without broader training coverage, whereas the second and third terms depend more strongly on how the policy represents and uses instruction structure. In particular, if the policy can isolate the instruction component that is relevant at the current execution stage while suppressing unnecessary dependence on inactive subtasks, then its sensitivity to novel recombinations may be reduced. This motivates introducing an idealized stage-wise modular policy as a conceptual reference point.

\begin{corollary}[Modular bound]
\label{cor:modular_gap_bound}
Assume that for each stage \(i\),
\[
0 \le \ell_{i,\theta}(z,a,l_i) \le M_i
\qquad
\forall (z,a,l_i)\in \operatorname{supp}(p_i)\cup \operatorname{supp}(q_i).
\]
Then
\begin{equation}
\label{eq:modular_gap_bound}
\begin{aligned}
|\Delta_q^{\mathrm{m}}(\theta)|
\le
\sum_{i=1}^n M_i \Big(
&\|q(l_i)-p(l_i)\|_1 \\
&+
\mathbb{E}_{l_i\sim p}\|q(z,a\mid l_i)-p(z,a\mid l_i)\|_1\Big).
\end{aligned}
\end{equation}
\end{corollary}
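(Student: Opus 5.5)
The plan is to use the stage-wise form of the loss to split the modular gap into a sum of per-stage gaps, and then to run the two-term analogue of the argument in Proposition~\ref{prop:general_gap_bound} on each stage separately. Since
\[
L_\theta(z,a,l)=\sum_{i=1}^n \mathbf 1\{\sigma(z)=i\}\,\ell_{i,\theta}(z,a,l_i),
\]
linearity of the integral gives
\[
\Delta_q^{\mathrm m}(\theta)=\sum_{i=1}^n \int \bigl(q(z,a,l)-p(z,a,l)\bigr)\,\mathbf 1\{\sigma(z)=i\}\,\ell_{i,\theta}(z,a,l_i)\,dz\,da\,dl .
\]
The integrand of the \(i\)-th term depends on \(l\) only through \(l_i\). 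So I integrate out \(l_{-i}\) first (Fubini, which is justified because the integrand is bounded and measurable). This replaces \(q(z,a,l)\) and \(p(z,a,l)\) by the stage marginals \(q_i(z,a,l_i)\) and \(p_i(z,a,l_i)\) defined in the statement. Each stage contribution becomes
\[
\Delta_i := \int \bigl(q_i(z,a,l_i)-p_i(z,a,l_i)\bigr)\,\tilde\ell_i(z,a,l_i)\,dz\,da\,dl_i,
\qquad \tilde\ell_i := \mathbf 1\{\sigma(z)=i\}\,\ell_{i,\theta}.
\]
By hypothesis, \(0\le\tilde\ell_i\le M_i\) on \(\operatorname{supp}(p_i)\cup\operatorname{supp}(q_i)\). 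The triangle inequality then gives \(|\Delta_q^{\mathrm m}|\le\sum_i|\Delta_i|\).

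For each stage I factor \(q_i(z,a,l_i)=q(l_i)\,q(z,a\mid l_i)\), and similarly for \(p_i\). I then add and subtract the single intermediate term \(p(l_i)\,q(z,a\mid l_i)\). This yields the exact decomposition \(\Delta_i=A_i+C_i\), where
\[
A_i=\int \bigl(q(l_i)-p(l_i)\bigr)\,q(z,a\mid l_i)\,\tilde\ell_i\,dz\,da\,dl_i,
\qquad
C_i=\int p(l_i)\bigl(q(z,a\mid l_i)-p(z,a\mid l_i)\bigr)\,\tilde\ell_i\,dz\,da\,dl_i.
\]
This mirrors the proof of Proposition~\ref{prop:general_gap_bound}, except that no compositional term \(B_i\) appears, because there is no \(l_{-i}\) left to condition on. I bound \(|\tilde\ell_i|\le M_i\). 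For \(A_i\), integrating \(q(z,a\mid l_i)\) over \((z,a)\) gives \(1\), so \(|A_i|\le M_i\|q(l_i)-p(l_i)\|_1\). For \(C_i\), the bound is directly \(M_i\,\mathbb E_{l_i\sim p}\|q(z,a\mid l_i)-p(z,a\mid l_i)\|_1\). Summing over \(i\) gives \eqref{eq:modular_gap_bound}.

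The only delicate step is making sure the bound \(M_i\) is applied only where it is valid, namely on \(\operatorname{supp}(p_i)\cup\operatorname{supp}(q_i)\). After marginalization, every integrand is multiplied by a density that vanishes off that set, so the bound holds wherever it is used.

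A second point needs a convention. The conditional \(q(z,a\mid l_i)\) must be defined where \(q(l_i)=0\) but \(p(l_i)>0\), since the intermediate term \(p(l_i)\,q(z,a\mid l_i)\) is evaluated there. I would fix any probability density at such points, for example \(p(z,a\mid l_i)\). The decomposition remains exact, because \(q(l_i)\) multiplies \(q(z,a\mid l_i)\) in the original expression and so the choice does not affect \(q_i\). The bounds on \(A_i\) and \(C_i\) only use that this conditional integrates to one.

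The same convention handles the analogous issue for \(p(z,a\mid l_i)\) where \(p(l_i)=0\). I expect this to be the only real obstacle, and it is a bookkeeping matter rather than a substantive one.
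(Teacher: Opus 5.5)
Your proposal is correct and follows the paper's proof essentially step for step. You split the loss by stage, integrate out $l_{-i}$ to pass to the stage marginals $q_i,p_i$, add and subtract $p(l_i)\,q(z,a\mid l_i)$, and bound the two pieces by $M_i\|q(l_i)-p(l_i)\|_1$ and $M_i\,\mathbb{E}_{l_i\sim p}\|q(z,a\mid l_i)-p(z,a\mid l_i)\|_1$.

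Your version is more careful than the paper's, which silently drops $\mathbf 1\{\sigma(z)=i\}$ after marginalizing and ignores undefined conditionals. One correction: the convention should be specifically $q(z,a\mid l_i):=p(z,a\mid l_i)$ rather than an arbitrary density. The bound on $A_i$ also needs $\tilde\ell_i\le M_i$ wherever that chosen conditional has mass, not just that it integrates to one.
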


\begin{proof}
Starting from
\[
\Delta_q^{\mathrm{m}}(\theta)
=
\int
\bigl(q(z,a,l)-p(z,a,l)\bigr)\,
L_\theta(z,a,l)\,dz\,da\,dl,
\]
and using
\[
L_\theta(z,a,l)
=
\sum_{i=1}^n \mathbf 1\{\sigma(z)=i\}\,\ell_{i,\theta}(z,a,l_i),
\]
we obtain
\[
\Delta_q^{\mathrm{m}}(\theta)
=
\sum_{i=1}^n
\int
\mathbf 1\{\sigma(z)=i\}\,
\ell_{i,\theta}(z,a,l_i)\,
\bigl(q(z,a,l)-p(z,a,l)\bigr)\,dz\,da\,dl.
\]
Since \(\ell_{i,\theta}(z,a,l_i)\) depends only on \((z,a,l_i)\), we can integrate out
\(l_{-i}\) and get
\[
\Delta_q^{\mathrm{m}}(\theta)
=
\sum_{i=1}^n
\int
\ell_{i,\theta}(z,a,l_i)\,
\bigl(q(z,a,l_i)-p(z,a,l_i)\bigr)\,dz\,da\,dl_i.
\]
Then we have
\[
q(z,a,l_i)-p(z,a,l_i)
=
\bigl(q(l_i)-p(l_i)\bigr)\,q(z,a\mid l_i)
+
p(l_i)\,\bigl(q(z,a\mid l_i)-p(z,a\mid l_i)\bigr).
\]
Therefore,
\[
\Delta_q^{\mathrm{mod}}(\theta)
=
\sum_{i=1}^n (U_i+V_i),
\]
where
\[
U_i
=
\int
\ell_{i,\theta}(z,a,l_i)\,
\bigl(q(l_i)-p(l_i)\bigr)\,
q(z,a\mid l_i)\,dz\,da\,dl_i,
\]
and
\[
V_i
=
\int
\ell_{i,\theta}(z,a,l_i)\,
p(l_i)\,
\bigl(q(z,a\mid l_i)-p(z,a\mid l_i)\bigr)\,dz\,da\,dl_i.
\]
Using \(0\le \ell_{i,\theta}\le M_i\), we bound
\[
|U_i|
\le
M_i
\int
|q(l_i)-p(l_i)|
\left(\int q(z,a\mid l_i)\,dz\,da\right)\,dl_i
=
M_i\,\|q(l_i)-p(l_i)\|_1,
\]
and
\[
|V_i|
\le
M_i
\int
p(l_i)
\left(
\int
|q(z,a\mid l_i)-p(z,a\mid l_i)|\,dz\,da
\right)\,dl_i
=
M_i
\mathbb{E}_{l_i\sim p}
\|q(z,a\mid l_i)-p(z,a\mid l_i)\|_1.
\]
Summing over \(i\) yields \eqref{eq:modular_gap_bound}.
\end{proof}

\section{Policy Network}
\label{app:network}
We use a frozen DINOv2 model as the image encoder. Each instruction is represented as a token, while multi-instruction inputs are represented by multiple instruction tokens that serve as queries. The image tokens serve as keys and values, following a design similar to Q-Former~\cite{li2023blip}. The resulting output tokens are then injected into the flow-matching transformer through cross-attention. Further details are shown in Figure~\ref{fig:policy network}.

\begin{figure}[t]
    \centering
    \includegraphics[width=0.9\linewidth]{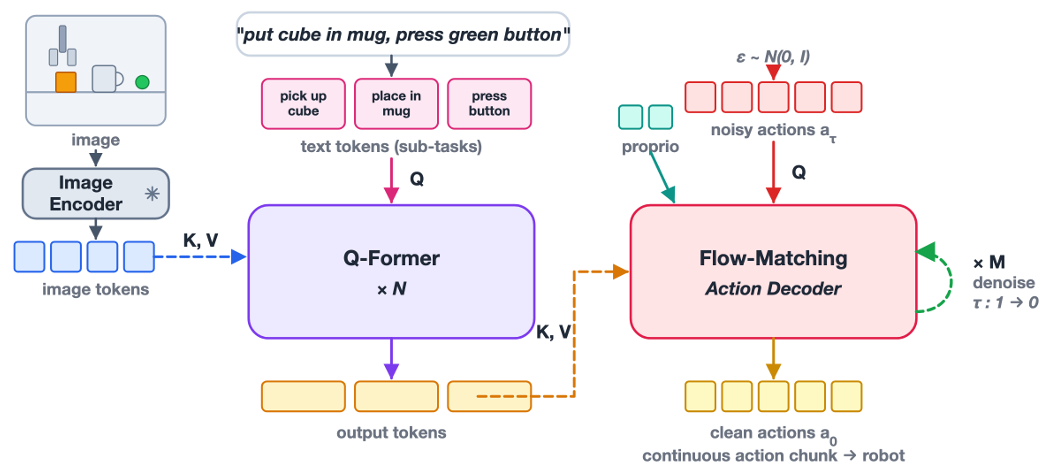}
    \caption{
    Policy network structure.
    }
    \label{fig:policy network}
    \vspace{-12pt}
\end{figure}

\section{Q1: Is Full Tuple Coverage Necessary?}
\label{app:q1}
\paragraph{Pick-Place-Press.}We trained the policy for 100k steps with a batch size of 64 on the Pick-Place-Press (PPP) task, conducting experiments across 55 different combinations in total. Figure~\ref{fig:whole b log} shows similar trend as Figure~\ref{fig:ppp_budget}. Figure~\ref{fig:pool seen pair} shows that OOD tasks with greater seen-pair coverage achieve higher success rates. Figures~\ref{fig:B8 success rate.} and~\ref{fig:B8 seen pairs.} show that different training sets lead to different OOD performance across tasks, and that the number of seen pairs may be correlated with OOD success. Additional results, provided in both image and HTML formats, are included in the supplementary materials.

\begin{figure}[t]
    \centering
    \begin{subfigure}[t]{0.45\linewidth}
        \centering
        \includegraphics[width=\linewidth]{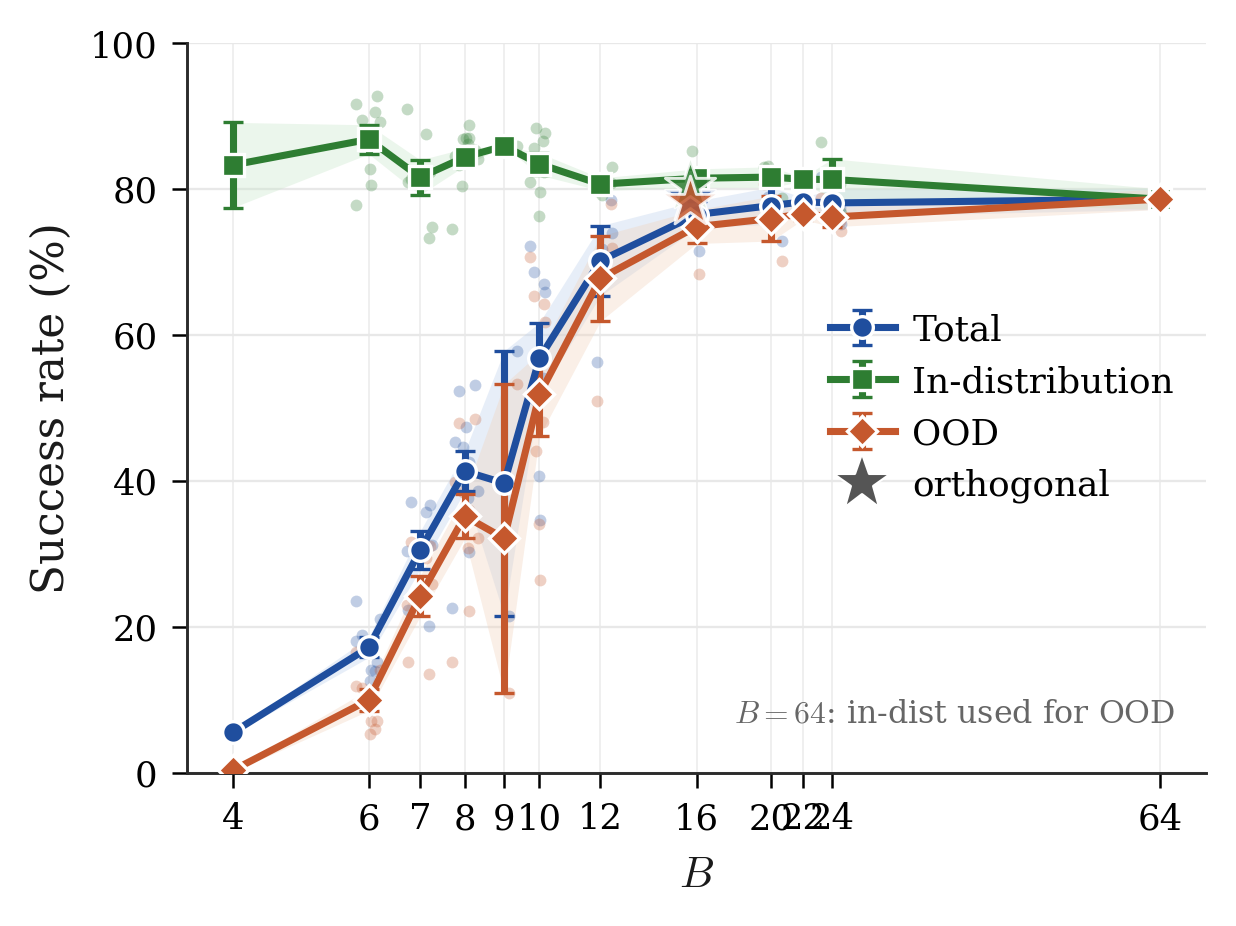}
        \caption{Success rate vs budge $B$ (full experiments). The orthogonal set achieves the same performance as the full task set using only one-fourth of the tasks.}
        \label{fig:whole b log}
    \end{subfigure}
    \hfill
    \begin{subfigure}[t]{0.45\linewidth}
        \centering
        \includegraphics[width=\linewidth]{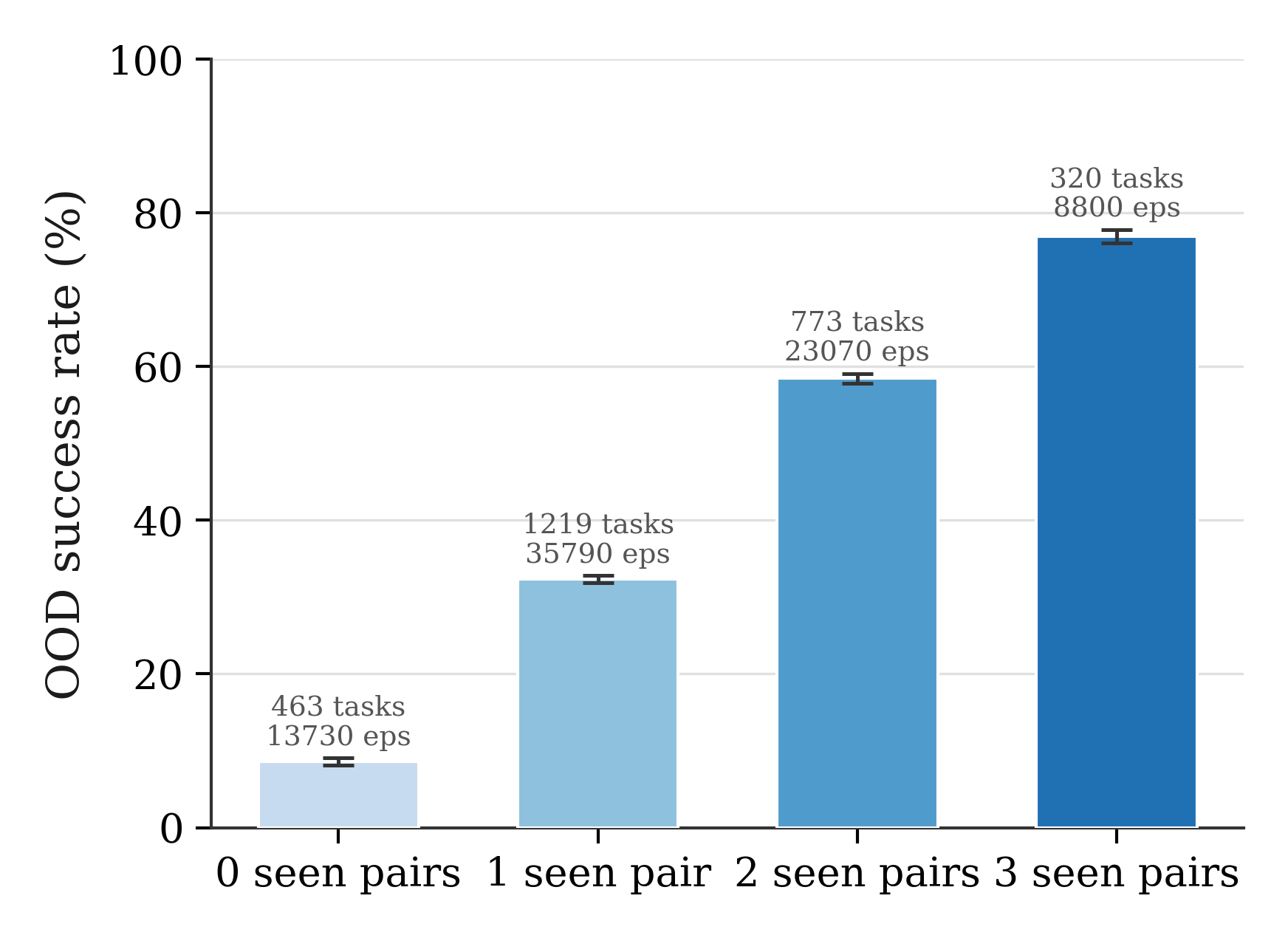}
        \caption{OOD tasks with higher seen pairs lead to higher success.}
        \label{fig:pool seen pair}
    \end{subfigure}
    \caption{Additional experiments on PPP task.}
    \label{fig: whole sets}
    \vspace{-12pt}
\end{figure}

\begin{figure}[t]
    \centering
    \includegraphics[width=0.9\linewidth]{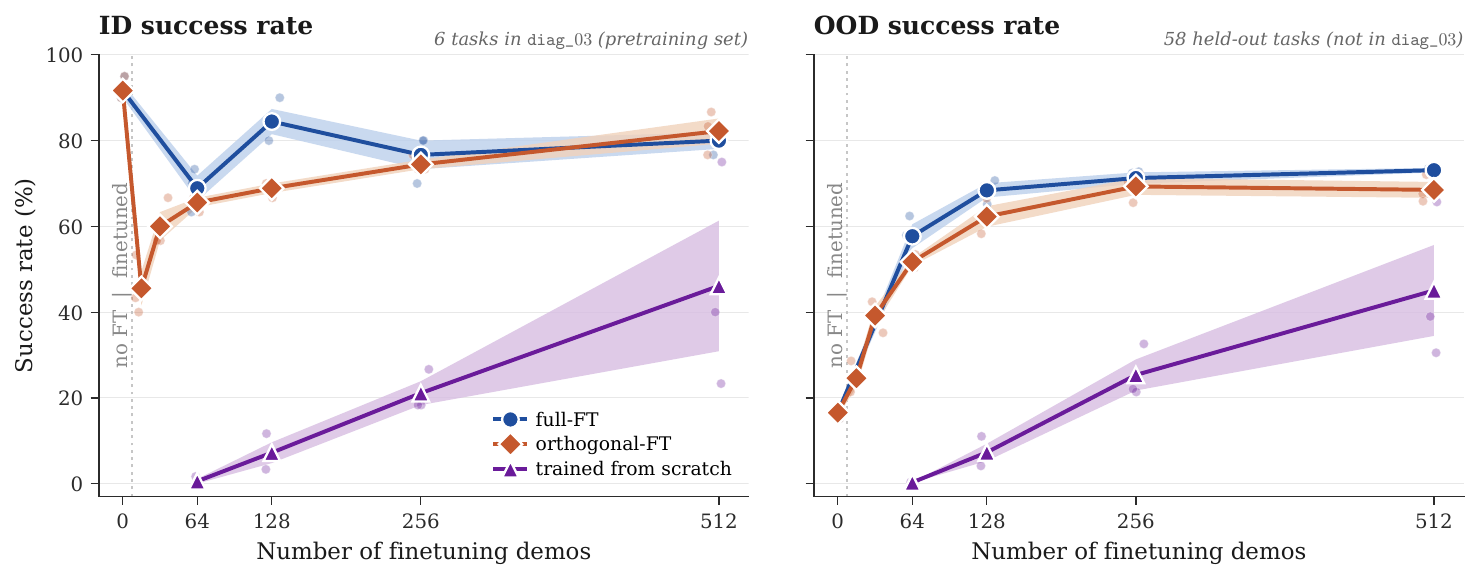}
    \caption{
    Finetuning from sparse pretraining ($\mathtt{diag\_03}, B=6$). Small amounts of broad-coverage data substantially improve OOD success, suggesting that sparse pretraining learns reusable skills but not reliable instruction steering, which can be acquired with limited additional data.
    }
    \label{fig:ft diag03}
    \vspace{-12pt}
\end{figure}

\section{Q2: What Is Missing When Sparse Coverage Fails?}
\label{app:q2}

\begin{wrapfigure}{r}{0.45\linewidth}
    \vspace{-12pt}
    \centering
    \includegraphics[width=0.95\linewidth]{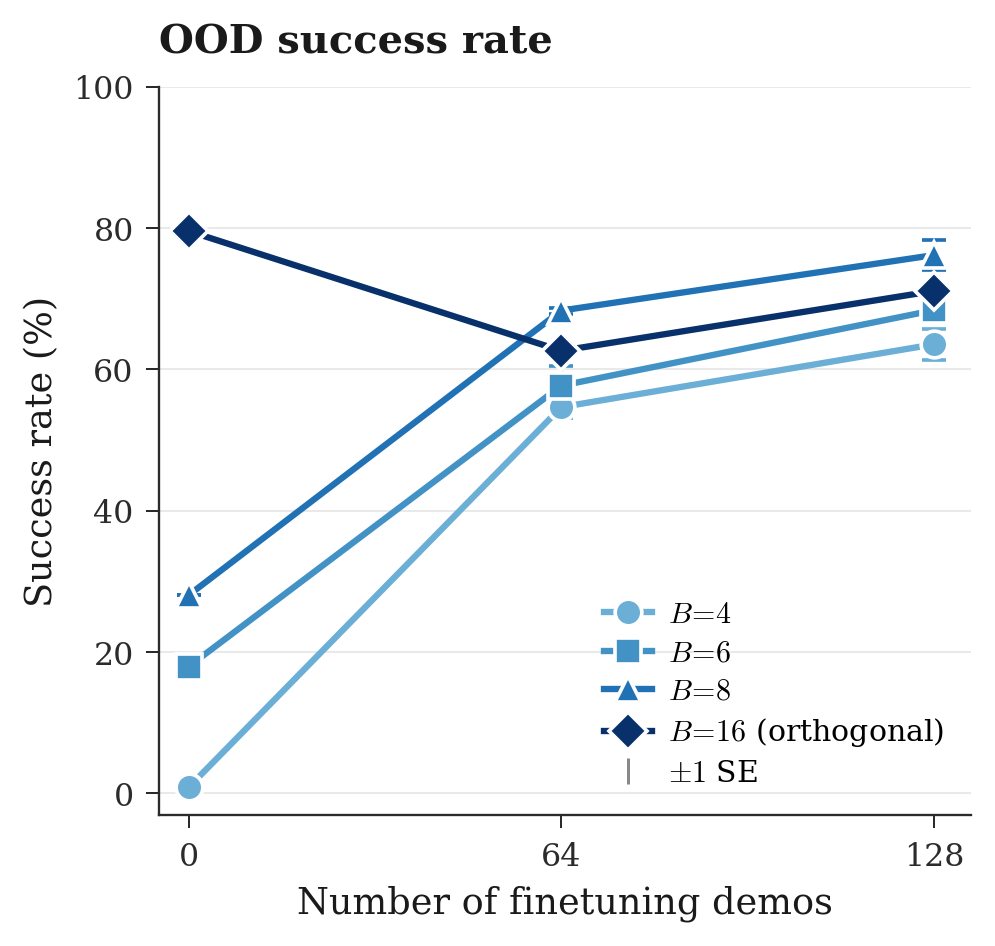}
    \caption{
    Finetuning from different pretrained model ($B=4,6,8,16$). \textbf{Only one demonstration} per task can significantly improve OOD success. However, finetuning can also suffer from overfitting when the pretrained model is already strong.
    }
    \label{fig:ft different B}
    \vspace{-12pt}
\end{wrapfigure}
We pretrained the model for 100k steps and then finetuned it for only 5k steps, using a batch size of 64 for both stages. In addition to pretraining on the minimal diagonal sets shown in Figure~\ref{fig:ft}, we also conduct experiments on another sparse-coverage set with $B=6$, as shown in Figure~\ref{fig:ft diag03}. The results lead to a similar conclusion: finetuning with only one demonstration per task significantly improves OOD success. However, the $B=6$ setting achieves higher OOD performance, likely because the pretrained checkpoint already exhibits stronger OOD generalization. We further conduct finetuning experiments using checkpoints pretrained with larger budgets, including $B=8$ and $B=16$ (orthogonal). Figure~\ref{fig:ft different B} shows that better pretrained checkpoints generally lead to better OOD performance after finetuning. However, when the pretrained checkpoint is already strong, as in the orthogonal setting, finetuning can degrade OOD performance. This may be because finetuning uses only 64 demonstrations, one per task, which can lead to overfitting to the finetuning dataset. A similar degradation is also observed on ID tasks.

Next, we evaluate a more data-limited finetuning setting using only 16 or 32 total demonstrations, which averages to less than one demonstration per task across the full 64-task space. In this setting, we compare the orthogonal task set (comprising 2 demonstrations per task for the 32-demonstration case) against randomly sampled task sets. For the random baseline, we sample 16 or 32 distinct tasks from the 64 available tasks and collect a single demonstration for each. We generate five different training task sets via random sampling, finetune the same pretrained model on each, and compare the resulting performance to the orthogonal design. As shown in Figure~\ref{fig:ft_16b_32b}, across two different pretrained checkpoints ($\mathtt{diag\_B4}$ and $\mathtt{diag\_03}$), finetuning on the orthogonal set substantially outperforms finetuning on randomly sampled tasks.

\begin{figure}[t]
    \centering
    \includegraphics[width=0.9\linewidth]{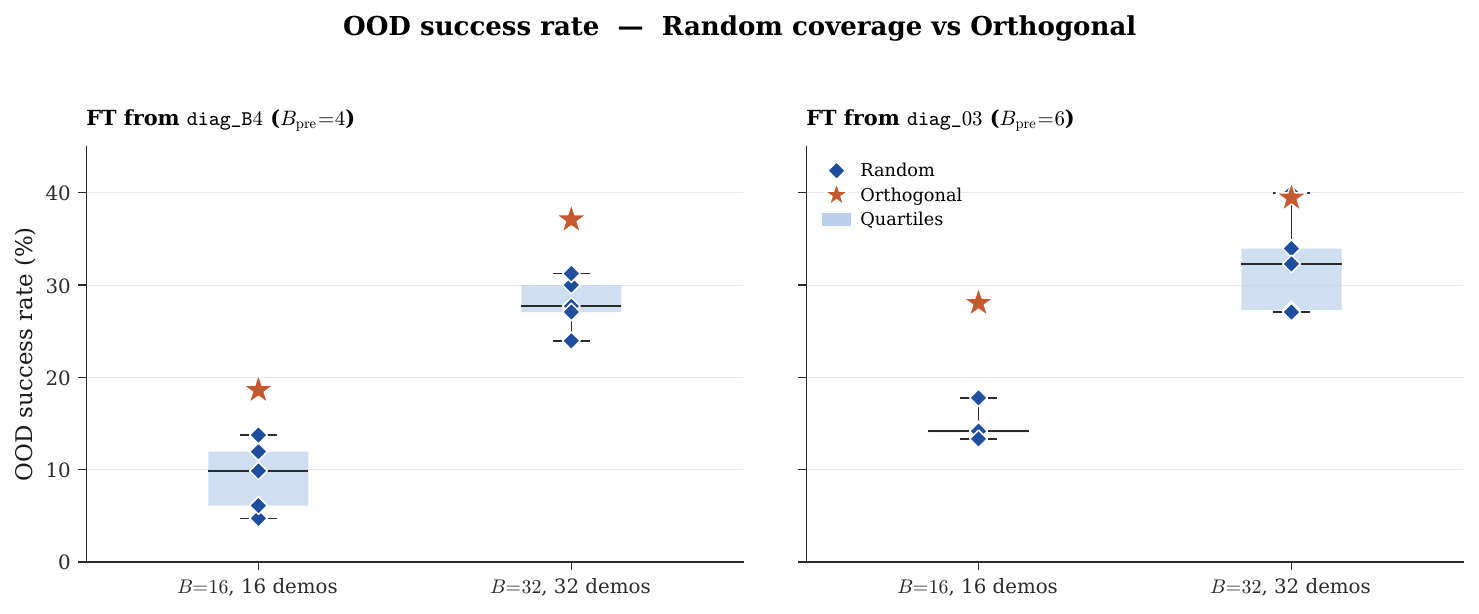}
    \caption{
    The orthogonal set achieves superior performance in the extremely data-limited setting.
    }
    \label{fig:ft_16b_32b}
    \vspace{-12pt}
\end{figure}

\section{Q3: What Coverage Is Needed for Dependent Instructions?}
\label{app:q3}
\begin{figure}[t]
    \centering
    \includegraphics[width=0.99\linewidth]{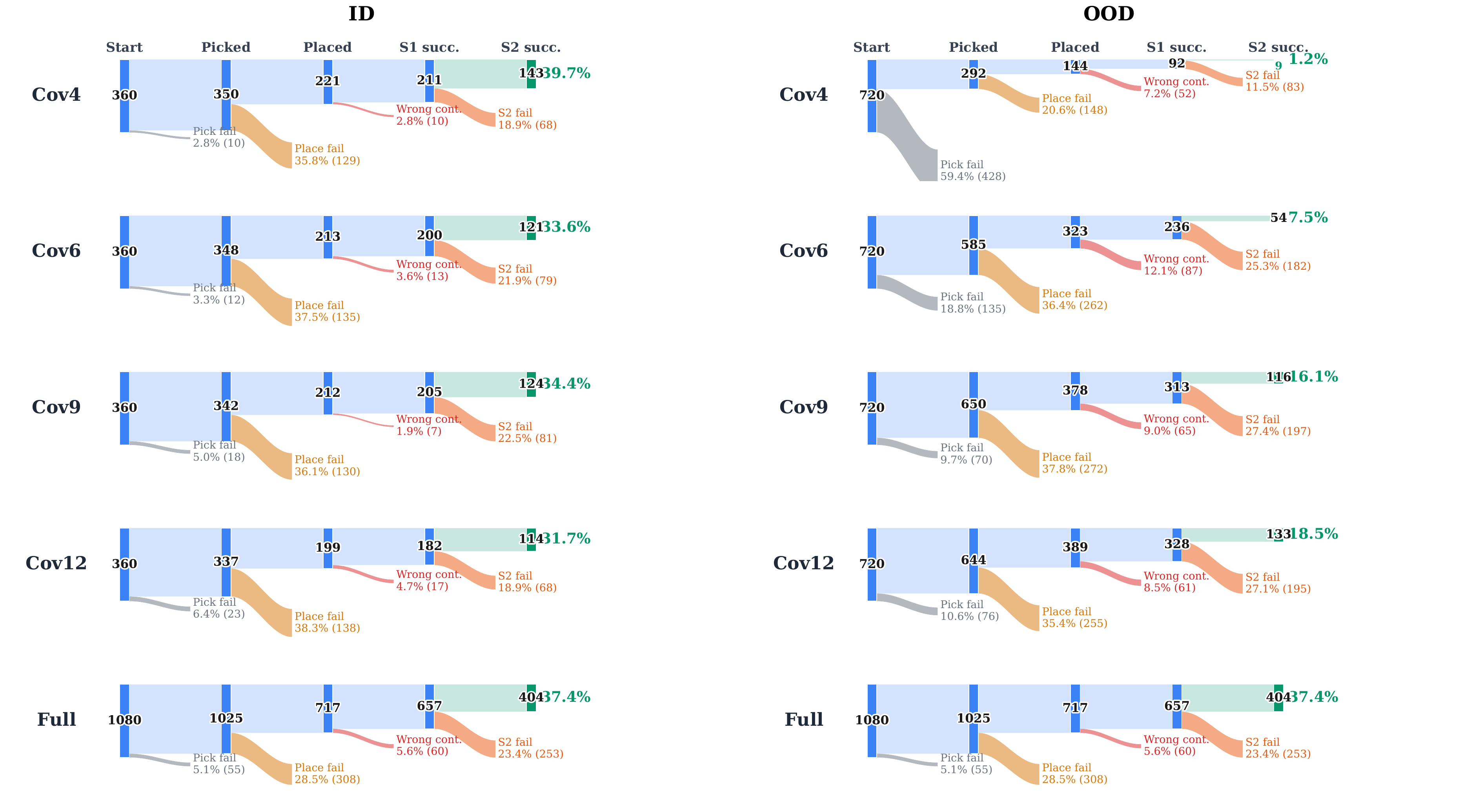}
    \caption{
    Failure case analysis on 2S-PP. Numbers in the figure means the number of rollouts.
    }
    \label{fig:sankey 36}
    \vspace{-12pt}
\end{figure}
To account for the increased difficulty of this task, we trained the policy for 150k steps.
\paragraph{36 task set.}The Sankey plot in Figure~\ref{fig:sankey 36} shows that ID performance is similar across settings. However, for OOD tasks, performance degrades significantly when there are fewer distinct $(o_1,c_2)$ pairs. This degradation appears not only as placing object $o_1$ into an incorrect container, but also as a substantially lower grasp success rate. This is likely because four distinct pairs provide insufficient coverage, causing $o_1$ and $c_2$ to become spuriously correlated. This effect is quickly mitigated when the coverage is increased to six pairs, as shown in Cov6, although the wrong-container rate remains high. These results suggest that instruction dependencies require broader coverage to avoid spurious correlations.

\paragraph{48 task set.}We extend 36 task set to 48 task set, and each for use 5 training seeds train the model and do the average. we use four values of $o_1\in\{1,2,3,4\}$ and three values of $c_2\in\{1,2,3,4\}$. The full task set contains $4\times3\times4=48$ tasks. Starting from the orthogonal design (16 training tasks) with all $16$ dependent pairs $(o_1,c_2)$ covered (Cov16), we reduce the number of observed dependent pairs to $12$, $9$, and $6$, yielding Cov12, Cov9, and Cov6, respectively, while keeping $16$ training tasks. Similarly, Figure~\ref{fig:cont0 48} illustrates that broader dependent-pair coverage enhances OOD performance and minimizes same-container violations, suggesting that joint combinations of semantically dependent factors must be well-represented during training. Conversely, narrowing the coverage to fewer $(o_1,c_2)$ pairs yields higher ID success and fewer ID violations. This ID improvement stems from denser per-pair supervision; with fewer total pairs to cover, each individual pair receives more concentrated demonstration data. Figure~\ref{fig:sankey 48} shows the Sankey plot, which has the similar pattern as Figure~\ref{fig:sankey 36}.

\begin{figure}[t]
    \centering
    \includegraphics[width=0.99\linewidth]{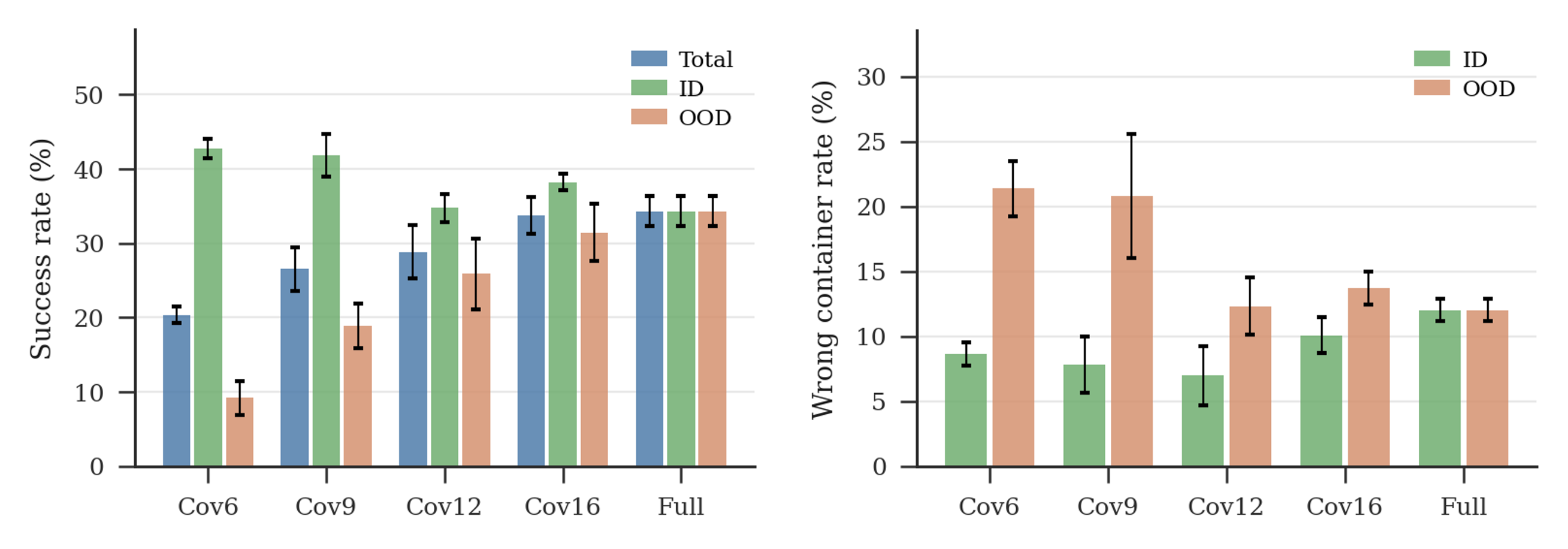}
    \caption{
    Performance and same-container violations in 2S-PP (48 task set). 
    Broader dependent-pair coverage improves success and reduces relational errors (e.g., put object $o_1$ into container $c_2$).
    }
    \label{fig:cont0 48}
    \vspace{-12pt}
\end{figure}

\begin{figure}[t]
    \centering
    \includegraphics[width=0.99\linewidth]{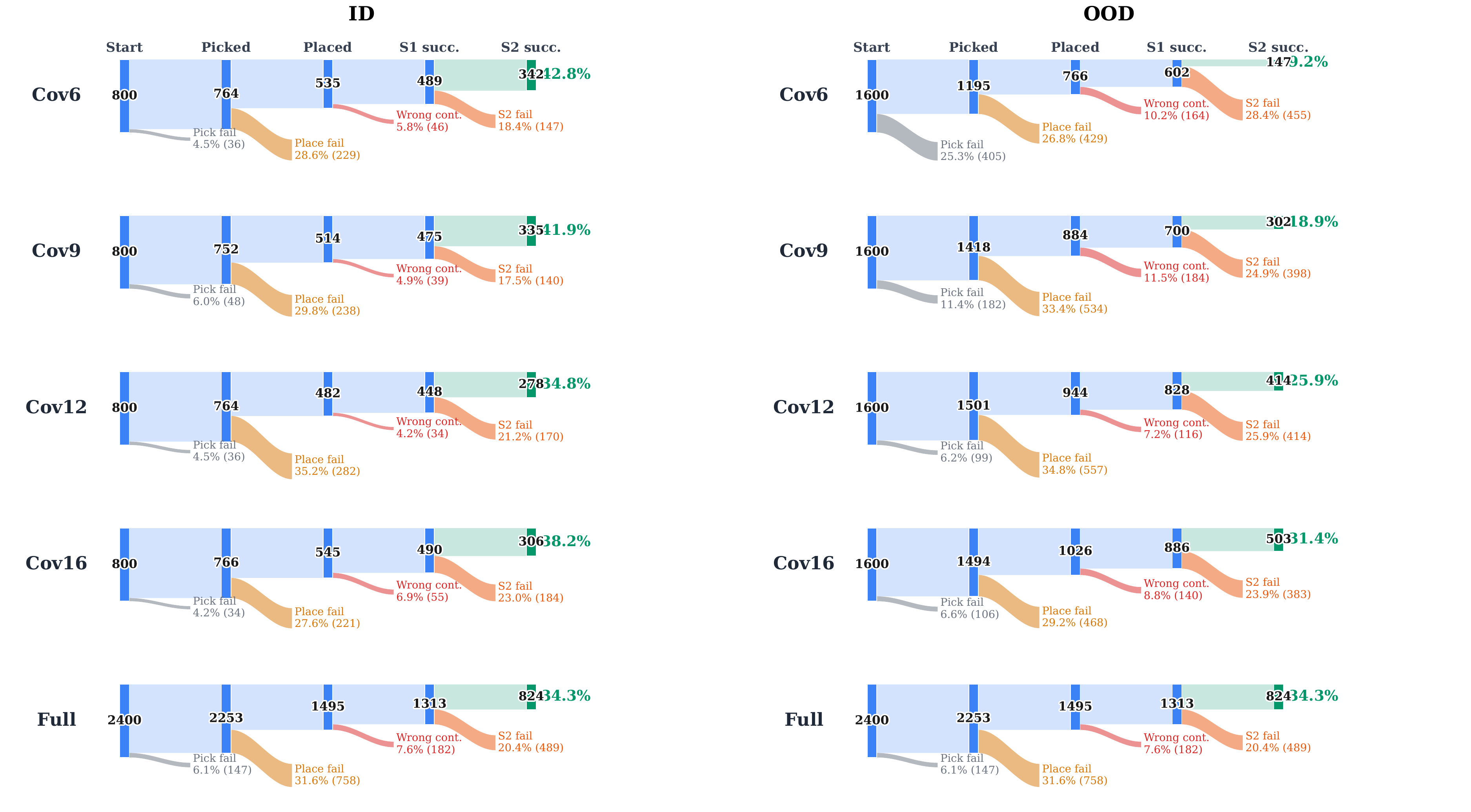}
    \caption{
    Failure case analysis on 2S-PP (48 task set). Numbers in the figure means the number of rollouts.
    }
    \label{fig:sankey 48}
    \vspace{-12pt}
\end{figure}

\begin{figure}[t]
    \centering
    \includegraphics[width=0.9\linewidth]{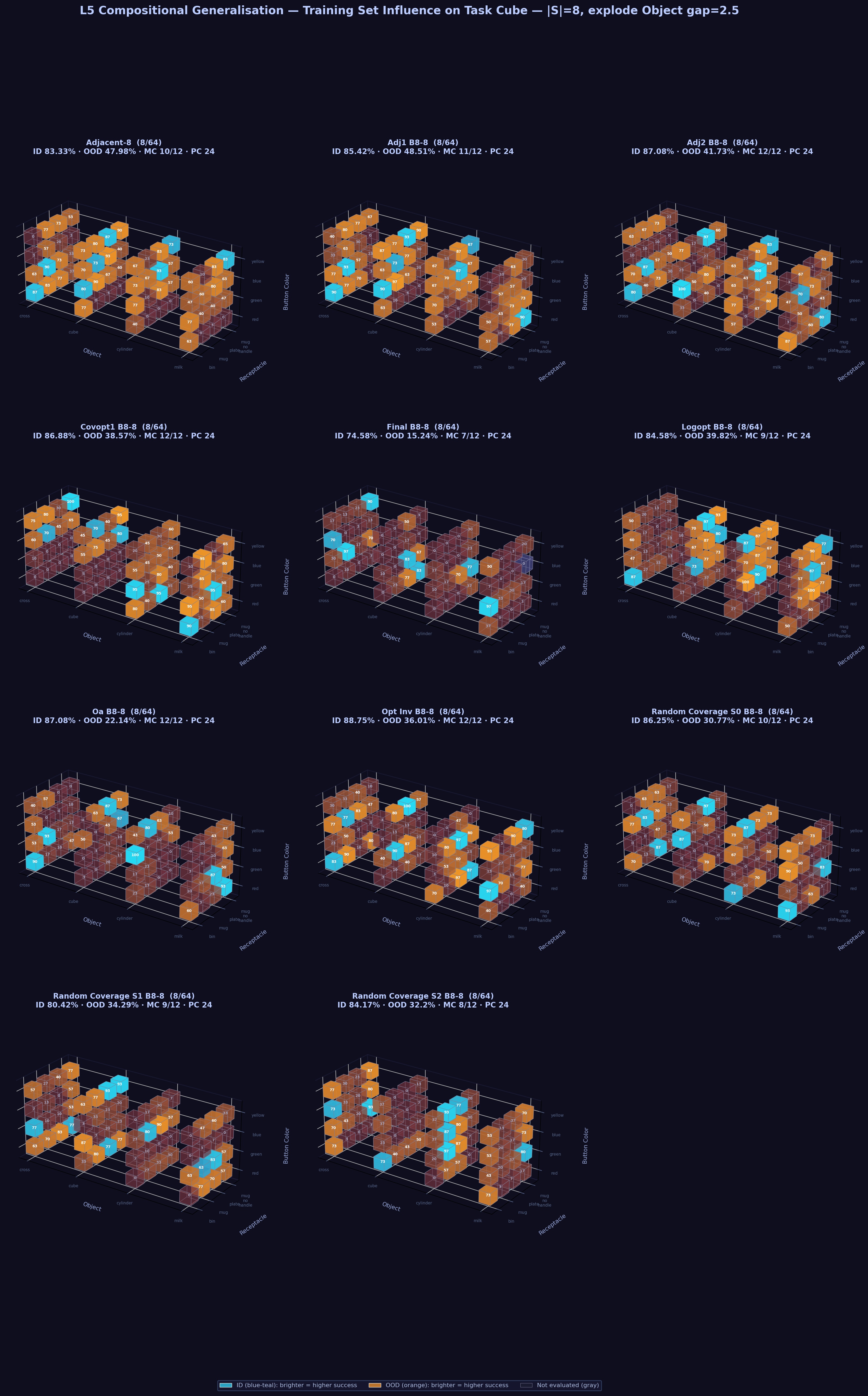}
    \caption{
    Success rate at $B=8$ .
    }
    \label{fig:B8 success rate.}
    \vspace{-12pt}
\end{figure}

\begin{figure}[t]
    \centering
    \includegraphics[width=0.9\linewidth]{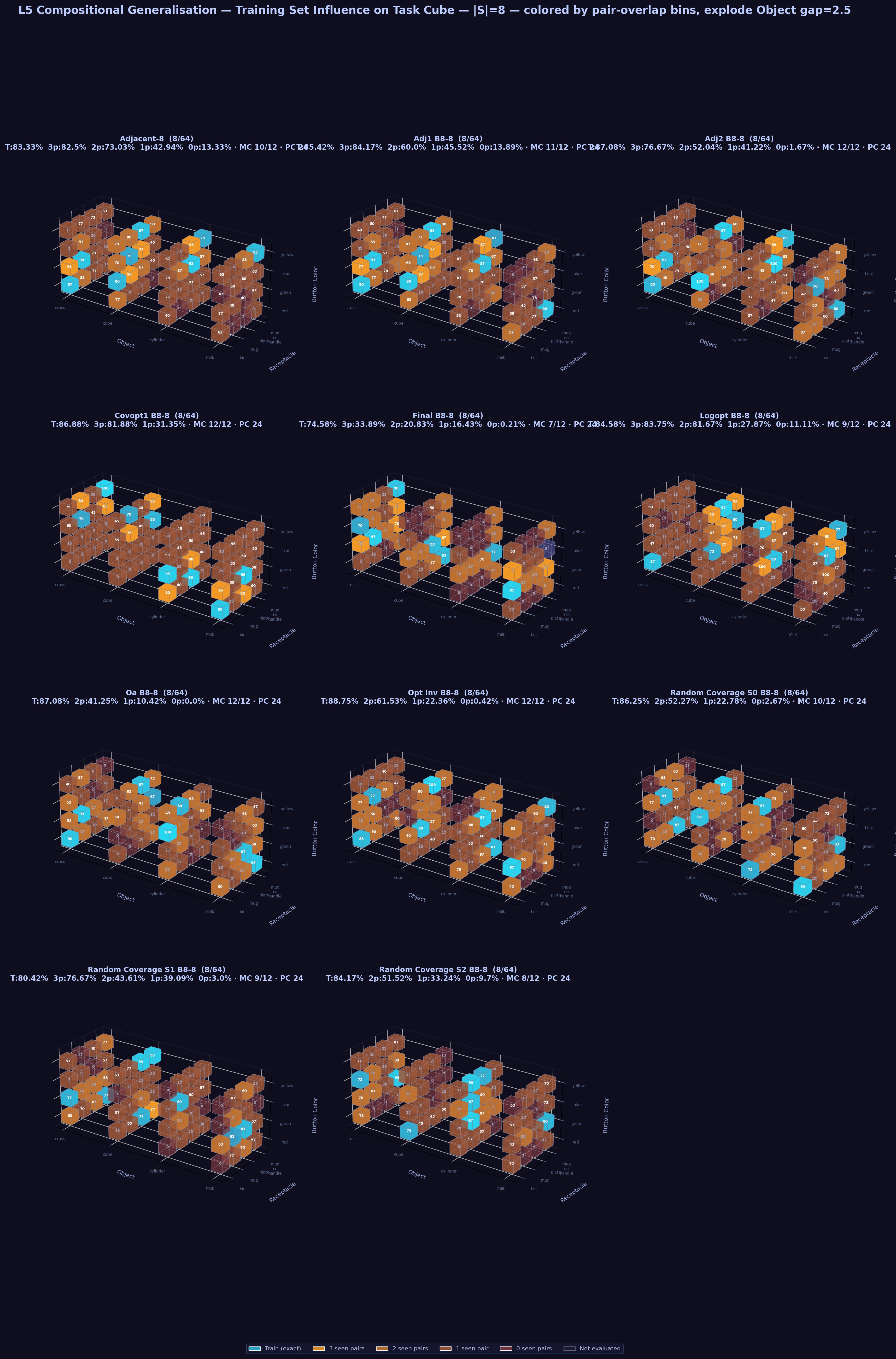}
    \caption{
    Seen pair count at $B=8$ .
    }
    \label{fig:B8 seen pairs.}
    \vspace{-12pt}
\end{figure}